\documentclass[journal]{IEEEtran}

\usepackage{amsmath,amssymb,amsfonts}
\usepackage{amsthm}
\usepackage{algorithmic}
\usepackage{algorithm}
\usepackage{array}
\usepackage[caption=false,font=normalsize,labelfont=sf,textfont=sf]{subfig}
\usepackage{textcomp}
\usepackage{stfloats}
\usepackage{url}
\usepackage{verbatim}
\usepackage{graphicx}
\usepackage{cite}
\usepackage{booktabs}
\usepackage{pifont}
\usepackage{xspace}
\usepackage[hidelinks, bookmarksnumbered=true]{hyperref}

\newtheorem{theorem}{Theorem}

\newtheorem{lemma}[theorem]{Lemma}
\newtheorem{fact}[theorem]{Fact}
\newtheorem{remark}[theorem]{Remark}

\newtheorem{assumption}[theorem]{Assumption}

\newcommand{\etal}{\textit{et al.}\xspace}
\newcommand{\E}[1]{\mathbb{E}\left[#1\right]}
\newcommand{\normsq}[1]{\left\|#1\right\|^2}
\newcommand{\inner}[2]{\left\langle #1, #2 \right\rangle}
\newcommand{\ie}{i.e.,\xspace}
\newcommand{\eg}{e.g.,\xspace}

\begin{document}

\title{Federated Client Selection under Partial Visibility: A POMDP Approach with Spatio-Temporal Attention}

\author{
Qijun Hou,
Yuchen Shi,
Pingyi Fan,~\IEEEmembership{Fellow,~IEEE}
and Khaled B. Letaief,~\IEEEmembership{Fellow,~IEEE}%
\thanks{Q. Hou, Y. Shi, and P. Fan are with the Department of Electronic Engineering, BNRist, Tsinghua University, Beijing, China (e-mail: \{hqj23, shiyc21\}@mails.tsinghua.edu.cn, fpy@mail.tsinghua.edu.cn).}%
\thanks{K. B. Letaief is with the Department of Electronic and Computer Engineering, Hong Kong University of Science and Technology, Hong Kong (e-mail: eekhaled@ust.hk).}%
}

\markboth{IEEE Internet of Things Journal}%
{Hou \textit{et al.}: Federated Client Selection under Partial Visibility: A POMDP Approach}

\maketitle

\begin{abstract}
In Internet-of-Things (IoT) deployments, federated learning (FL) must address both data heterogeneity and a dynamic set of clients visible to the server, since IoT devices are intermittently available due to energy limitations, mobility, and connectivity constraints.
In contrast to most existing methods that assume full visibility, we formulate FL under partial visibility as a Partially Observable Markov Decision Process (POMDP) and propose a reinforcement learning framework that adapts client selection to the visible client set of each round.
Building on a multi-step Deep Q-Learning solution, we design a Spatio-Temporal Attention-based Q-Network that jointly encodes variable-length client observations and historical global models, and we further establish a convergence guarantee for the global model under a stable Q-network.
By integrating historical global information with client identity embeddings, the proposed method captures temporal training dynamics and client-specific characteristics across rounds.
Experiments on CIFAR-10, Fashion-MNIST, and UCI-HAR under two partial-visibility patterns and heterogeneous data distributions show that our approach achieves competitive or the best accuracy in the majority of settings while reducing communication and computation cost, suggesting its practical applicability to IoT and edge deployments.
\end{abstract}

\begin{IEEEkeywords}
Federated learning, client selection, partial visibility, POMDP, deep reinforcement learning, attention mechanism.
\end{IEEEkeywords}

\IEEEPARstart{F}{ederated} Learning (FL) enables collaborative model training across many distributed clients while preserving data privacy. A fundamental challenge in FL arises from data heterogeneity, where client data distributions are not independent and identically distributed (non-IID), which substantially hinders model convergence and generalization. To address this issue, a large body of prior work has investigated advanced aggregation strategies and client selection mechanisms to better cope with non-IID data.

In addition to data heterogeneity, system heterogeneity, such as varying computation capabilities, communication latency, and availability, also affects federated learning systems.
Partial visibility, in which only a subset of clients is visible to the server in each communication round, is a hallmark of IoT and edge deployments rather than a conventional form of system heterogeneity.
The set of clients visible to the aggregation server changes from round to round: devices enter and leave the visible set due to battery-driven duty-cycling, mobility, and intermittent wireless connectivity, and an edge server or a mobile server, such as a vehicular or drone-mounted aggregation node, may further restrict the visible clients to its current coverage region, as studied in \textit{EdgeFLow}~\cite{edgeflow}.
In each communication round, the server therefore observes only a subset of clients, its current cluster, rather than the full population.
We refer to this time-varying visibility as partial visibility, and the central challenge it poses for FL is adaptation: the training must continually adjust to whatever clients are visible each round.
Fig.~\ref{fig:partialvis} illustrates the two typical sources of partial visibility.
Motivated by these IoT scenarios, this work aims to enhance the performance of FL under partial visibility by adapting to the time-varying visible client set.

\begin{figure}[!t]
    \centering
    \subfloat[Mobile Server]{%
        \includegraphics[width=\columnwidth]{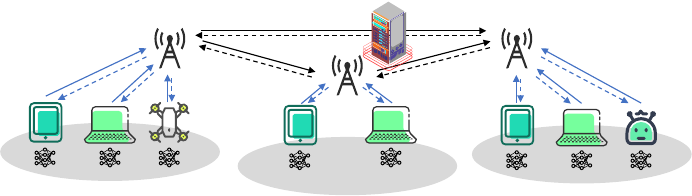}%
        \label{subfig:mobile}%
    }
    \hfil
    \subfloat[Random Availability]{%
        \includegraphics[width=0.80\columnwidth]{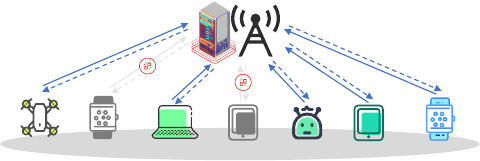}%
        \label{subfig:random}%
    }
    \caption{Illustration of two representative partial visibility scenarios. (a) depicts the scene where the server has to move across different regions; (b) depicts the scene where clients intentionally close the connection.\label{fig:partialvis}}
\end{figure}

A substantial body of research has explored a variety of approaches to tackle heterogeneity, among which client selection and aggregation strategies play a central role.
Representative existing methods select participating clients based on explicit evaluation metrics (\eg validation performance or loss reduction), heuristic rules (\eg resource-aware or fairness-driven criteria), or learning-based policies, including reinforcement learning (RL) formulations.
In particular, RL-based methods are considered more adaptive under severe heterogeneity. Existing studies, \eg FAVOR~\cite{favor}, FLASH-RL~\cite{flashrl}, FedAgent~\cite{fedagent}, commonly formulate client selection in federated learning under data heterogeneity as a Markov Decision Process (MDP), and leverage RL to learn adaptive selection policies.
However, these approaches typically rely on the assumption that the server has access to information about the entire client population in every communication round to guide the selection process, leaving client selection under partial visibility as an under-explored problem that calls for principled modeling and algorithmic solutions. Existing methods that consider partial visibility address it mainly through aggregation-side corrections or selection rules based only on the current round~\cite{f3ast,fedawe,edgeflow}, and often ignore the training history in client selection.

Inspired by these works, we consider client selection as a sequential decision-making problem, while also accounting for partial visibility.
In this context, we naturally formulate client selection as a Partially Observable Markov Decision Process (POMDP), since the server's observations provide incomplete information about the underlying global training state.
To make effective decisions in a POMDP, existing solution methods incorporate the history of past observations and actions.
Such historical context enables the server to reason about the relationship between the underlying states and the observations, mitigating the uncertainty introduced by partial visibility.

To model the complex dependencies among visible clients and historical information, we introduce the attention mechanism, a widely adopted approach for handling variable-length and sequential inputs while capturing interactions among different entities. Building upon this, we propose a \emph{Spatio-Temporal attention} architecture that integrates historical global information with current observations, and solve the POMDP with Deep Q-Learning (DQL).

In this paper, we propose an RL-based framework that adapts federated learning to the time-varying set of visible clients under partial visibility.
Our main contributions are as follows: (1) We formulate federated learning under partial visibility as a POMDP, in which the server adapts its training to an incompletely observed, time-varying set of visible clients, casting adaptation as a sequential decision-making problem (Section~\ref{sec:pomdp}). (2) We develop a multi-step DQL solution with a novel Spatio-Temporal Attention-based Q-Network that jointly encodes variable-length client observations and historical global models, enabling per-client Q-value estimation under dynamic cluster sizes (Sections~\ref{sec:dqn} and~\ref{sec:qnet}). (3) We provide a convergence guarantee for the global model conditioned on a stable Q-network (Section~\ref{sec:convergence}).
Experiments under heterogeneous data distributions show competitive performance across different partial visibility settings.

\section{Related Work}

Reinforcement Learning has attracted increasing attention for client selection in federated learning. In this paradigm, client selection is modeled as a policy that interacts with the environment through the aggregated global model.
Wang~\etal~\cite{favor} propose FAVOR, which formulates client selection as a DQL problem under data heterogeneity and selects the clients with the highest Q values. This work shows the potential of RL in client selection.
FLASH-RL~\cite{flashrl} integrates system heterogeneity, \ie communication delay, into the DQL-based framework and introduces a reputation-based reward, enabling faster FL training. The evaluation of FLASH-RL on different communication channels shows the ability of RL-based client selection to tackle system heterogeneity.
FedAgent~\cite{fedagent} further studies DQL-based client selection. This work addresses the overestimation of Q values in DQL using a Double Deep Q-Network. Knowledge Distillation is integrated to further align the local training of each client.
Other RL frameworks are also studied in FL scenarios. Chen~\etal~\cite{feddrl} and Zhao~\etal~\cite{fedppo} leverage Actor-Critic frameworks and the Proximal Policy Optimization (PPO) strategy to provide continuous aggregation weights for different participating clients.
While these works establish RL-based client selection, most existing works formulate the state of MDPs as indicators of all clients, relying on full visibility.

Partial visibility is characteristic of FL at the IoT/edge tier, where the set of clients visible to the server is dictated by the physical deployment (device availability and server coverage) rather than controlled by the learning system.
The study of this setting is inspired by early work on Decentralized Federated Learning (DFL)~\cite{dfl} and Sequential Federated Learning (SFL)~\cite{sfl,li2023sfl}.
DFL is a paradigm where clients communicate with each other directly to exchange updated parameters. SFL is a related paradigm in which clients update the model in a fixed sequence and pass the updated parameters to the next client.
Yu~\etal~\cite{snake} propose Snake Learning, which extends SFL by introducing a central server. The server communicates and exchanges parameters with each client sequentially while leveraging knowledge distillation to update the global model.
EdgeFLow~\cite{edgeflow} extends the idea that the server migrates among different edges of a network, having access to a group of clients simultaneously rather than one client at a time. The authors also prove the convergence of the training process and propose a client selection strategy based on the norm of gradient differences.
F3AST~\cite{f3ast} studies client selection under intermittent client availability
and selects clients that minimize the impact of client-sampling variance, while FedAWE~\cite{fedawe} equalizes the local training epochs for all participating clients.
Palit~\cite{trust} propose an RL-based client selection method to enhance the defense against malicious clients under partial visibility, showing that RL-based selection remains viable under partial visibility.

Since its proposal in~\cite{attention}, \emph{Attention} mechanisms provide a flexible way to model interactions among elements in a set or sequence by dynamically weighting their relative importance. A large body of research leverages attention mechanisms in the client selection scenario.
FedABC~\cite{fedabc} propose an attention-based algorithm to select clients with relevant semantics and reduce the total selection size. The attention weight is formulated by the KL divergence between the output distributions of updated parameters, which is deterministic instead of learnable.
Chen~\etal~\cite{fedacs} propose FedACS, which measures the similarity of clients through the cosine distance of model parameters and constructs the attention weights upon the similarity matrix.
FedAWAC~\cite{fedawac} studies FL from the perspective of overcoming catastrophic forgetting, which is a phenomenon where the global model parameters adapt to fresh data and deviate from the global optimum. The updated parameters of different clients are re-weighted based on the variance of output logits. The global parameters are averaged with a window of recent global parameters to avoid forgetting.
Training learnable attention networks for client selection remains relatively under-explored, as the server lacks sufficient supervised signals and well-defined objectives for neural network optimization. DQL provides a natural solution by enabling training through replay buffers and temporal-difference losses, which is the motivation of this paper.

\section{Methods}
\subsection{Problem Formulation}
\label{sec:FL}

We consider an FL system consisting of a central server and a population of \(N\) clients, denoted as \(\mathcal{U} = \left\{1, \dots, N\right\}\). Each client possesses a local dataset \(\mathcal{D}_{n}\triangleq \{(\mathbf{x}_i, y_i)\}_{i=1}^{|\mathcal{D}_{n}|}\). In conventional FL systems, \eg FedAvg~\cite{fedavg}, the goal is to minimize the global objective function:
\begin{equation}
    \label{eq:FL}
    \min_{W} \left\{\frac{1}{N}\sum_{n=1}^{N}\mathbb{E}_{\mathbf{x},y\in \mathcal{D}_n}\left[l(\mathbf{x}, y ; W)\right] \right\}
\end{equation}
where \(W \in \mathbb{R}^{d_{\text{model}}}\) denotes the parameters of a neural network, and \(l(\cdot)\) denotes the loss function.
In the \(t^{\text{th}}\) communication round, the server selects a subset of \(K\) clients, denoted as \(\mathcal{S}^t \subseteq \{1, \dots, N\}\), and sends a copy of the current global parameters \(W_{glob}^{t}\) to each selected client. The selected clients run \textit{Stochastic Gradient Descent} (SGD) to obtain the updated parameters \(\{W_{i}^{t}\}_{i\in \mathcal{S}^t}\). The server aggregates the updated parameters according to Eq.~\eqref{eq:Agg}:
\begin{equation}
    \label{eq:Agg}
    W_{glob}^{t+1} = \sum_{i\in \mathcal{S}^t} \frac{|\mathcal{D}_i|}{\sum_{j\in \mathcal{S}^t} |\mathcal{D}_j|}\cdot W_{i}^{t}
\end{equation}

Under partial visibility, the client selection during the \(t^{\text{th}}\) communication round is restricted to currently visible clients, referred to as a \textit{cluster} in this paper, denoted as \(\mathcal{C}^{t} \subset \{1, \dots, N\}\).
Therefore, we have the restricted selection range \(\mathcal{S}^t \subseteq \mathcal{C}^{t}\) where \(\mathcal{C}^{t}\) is externally determined and may vary across communication rounds, making balanced sampling over the entire client population more difficult. Throughout this paper, we assume each communication round admits a non-empty set of visible clients, \ie \(\mathcal{C}^{t} \neq \emptyset\); rounds in which no client is visible are excluded from the training timeline and not counted in the round index \(t\), reflecting practical deployments where the server schedules aggregation only when at least one client is available.

\subsection{POMDP Model}
\label{sec:pomdp}

In this paper, we model the process of client selection and aggregation as a POMDP. A typical POMDP consists of four basic elements: \textit{states}, \textit{actions}, \textit{rewards}, \textit{observations}.
\subsubsection{States}
Most existing MDP-based methods define the states as a combination of local model parameters from all clients and a group of meta-information, including computational capabilities, transmission delay, etc. In this paper, we assume the hardware systems are identical among all clients and thus ignore the meta-information, since our research focuses mainly on partial visibility and data heterogeneity.

Therefore, we define the state of the \(t^{\text{th}}\) round as the local updates that every client \emph{would} produce:
\begin{align}
    \label{eq:state}
    s^{t} &\triangleq \left(W_{1}^{t}, W_{2}^{t}, \dots, W_{N}^{t}\right) \\
    \text{where}\quad & W_{i}^{t} = W_{glob}^{t} - \eta \cdot \nabla_{i} W_{glob}^{t}
\end{align}
where \(\nabla_{i}(\cdot)\) represents the stochastic gradient of the local dataset \(\mathcal{D}_{i}\) and \(\eta\) denotes the learning rate.

Notably, the definition of states is a conceptual description. It does not imply that the server has access to the gradients or local updates of invisible clients, nor that such updates are actually computed. The agent only receives the \textit{observations} in a POMDP.

\subsubsection{Actions}
We define the actions as the selected clients' IDs within the current cluster:
\begin{equation}
    \label{eq:action}
    a^t \triangleq \left\{a_{1}^{t}, a_{2}^{t}, \dots, a_{K}^{t}\right\} \subseteq \mathcal{C}^t
\end{equation}

\subsubsection{Rewards}
The principle of reward design is that the reward directly reflects the performance of the global model after taking an action. In our work, the reward is defined as follows:
\begin{equation}
    \label{eq:reward}
    r^{t} = \lambda\cdot\mathcal{M}(W_{glob}^{t+1}) + (1-\lambda)\cdot r^{t-1}
\end{equation}
where \(\mathcal{M}(\cdot)\) is the F1-score of the global model, which is a widely used metric for reward design in RL-based client selection methods. \(\lambda\) is a hyperparameter that controls the variance of the reward.

The F1-score is evaluated on a small server-side \emph{public} validation set that is disjoint from any client's private data and far smaller than both the training data and the test set. We emphasize that this set is used \emph{only} to measure the global model's performance to orchestrate client selection, never to train the model, so the core privacy guarantee of FL, \ie that no client training sample ever leaves the device, is retained. Using a lightweight server-side performance proxy as the reinforcement signal is standard practice in RL-based federated client selection~\cite{edgeflow,flashrl,fedawac,feddrl,fedabc}. We adopt the classification-oriented F1-score for consistency with prior work; substituting a task-appropriate metric, or an unlabeled or self-supervised proxy that removes the need for any labeled server-side data, generalizes the formulation and is left as future work.

\subsubsection{Observations}
Due to partial visibility, the server is unable to collect the entire state \(s^t\). Instead, it can collect a subset of these local updates, along with the IDs of the visible clients. Therefore, we define the observation for communication round \(t\):
\begin{equation}
    \label{eq:observation}
    o^{t} = \left\{(W_{j}^{t}, j)\right\}, j \in \mathcal{C}^t
\end{equation}
The observation can be interpreted as the state \(s^t\) masked by the visibility \(\mathcal{C}^t\). Therefore, the observation function of POMDP is:
\begin{equation}
    \begin{aligned}
        \label{eq:obsfunc}
        O(a^{t-1}, s^t, o^t) &= \text{Pr}\left(o^t \mid a^{t-1}, s^t\right) \\
        &= \text{Pr}(\mathcal{C}^t)\cdot\prod_{j\in\mathcal{C}^t}\text{Pr}\left(W_j^t\mid W_{glob}^{t-1}\right)
    \end{aligned}
\end{equation}
where \(W_{glob}^{t-1}\) is determined by the previous action \(a^{t-1}\) and the prior state.

\subsection{Deep Q-Learning (DQL) Solution}
\label{sec:dqn}

In this section, we develop a DQL-based agent to solve the POMDP and make client selection decisions under partial visibility.

Unlike many RL problems, FL is characterized by limited samples and irreversible actions, where model updates cannot be rolled back once applied.
Under such constraints, DQL, as an off-policy learning algorithm, is particularly suitable due to its sample efficiency and natural compatibility with discrete decision spaces, making it well aligned with client selection in FL.

\subsubsection{History-based Q-function Approximation}
The canonical approach to POMDP solutions conditions policy decisions upon the cumulative interaction history~\cite{pomdpsurvey}:
\begin{equation}
    \label{eq:history}
    h_{1:t} = \left\{o^{1}, a^{1}, o^{2}, a^{2}, \dots, o^{t}\right\}.
\end{equation}
However, maintaining the full history is computationally impractical.
Instead, we approximate the optimal Q-function using a truncated \emph{temporal context} of the most recent \(H\) rounds:
\begin{equation}
    \label{eq:truncated_history}
    Q(h_{1:t}, a) \approx Q(h_{t-H:t}, a),
\end{equation}
which is assumed to capture sufficient temporal information.

\subsubsection{Client-wise Q-value Decomposition}
At round \(t\), an action \(a^t\) corresponds to selecting a subset of $K$ clients from the visible cluster \(\mathcal{C}^t\).
To handle variable cluster sizes and combinatorial action spaces, we decompose the Q-function into client-wise components:
\begin{equation}
    \label{eq:clientwise_q}
    Q(h_{t-H:t}, a^t) = \frac{1}{K} \sum_{i=1}^{K} \hat{Q}(h_{t-H:t}, a_i^t),
\end{equation}
where \(\hat{Q}(h_{t-H:t}, a_i^t)\) denotes the estimated contribution of selecting client \(a_i^t\) under the current history.
This formulation enables the agent to evaluate each visible client independently and select the top-\(K\) clients with the highest Q-values, naturally supporting clusters with varying sizes.

\subsubsection{Multi-Step DQL Optimization Objective}
The client-wise Q-function \(\hat{Q}(\cdot)\) is parameterized by a deep neural network and trained using multi-step temporal-difference learning.
Given a transition sequence of length \(H\) starting from round \(t\), the multi-step target is defined as:
\begin{equation}
    \label{eq:multistep_target}
    y^t = \sum_{m=0}^{H-1} \gamma^{m} r^{t+m}
    + \gamma^{H} \max_{a' \subseteq \mathcal{C}^{t+H}} Q(h_{t+1:t+H}, a')
\end{equation}
where \(\gamma \in (0,1)\) is the discount factor.
The corresponding optimization objective is given by:
\begin{equation}
    \label{eq:multistep_dqn_loss}
    \mathcal{L}_{\text{DQN}} =
    \mathbb{E}
    \left\|
    y^t - Q(h_{t-H:t}, a^t)
    \right\|^2
\end{equation}

In practice, the maximization over the action space at round \(t+H\) is efficiently implemented by selecting the top-\(K\) clients according to their client-wise Q-values.

\subsubsection{Temporal Parameter Aggregation}
Inspired by FedAWAC~\cite{fedawac}, we average the global parameters over the temporal context window:
\begin{equation}
    \label{eq:tempAgg}
    W_{glob}^{t+1} = \frac{1}{H}\left(\hat{W}_{glob}^{t+1} + \sum_{\tau=0}^{H-2} W_{glob}^{t-\tau}\right)
\end{equation}
where \(\hat{W}_{glob}^{t+1}\) is obtained by Eq.~\eqref{eq:Agg}.

This operation explicitly incorporates historical information into the global parameters, such that the aggregated parameters and the multi-step DQL target \(y^t\) share the same history length \(H\).

\subsection{Spatio-Temporal Attention-based Q-Network}
\label{sec:qnet}
\begin{figure}[!t]
    \centering
    \includegraphics[width=\columnwidth]{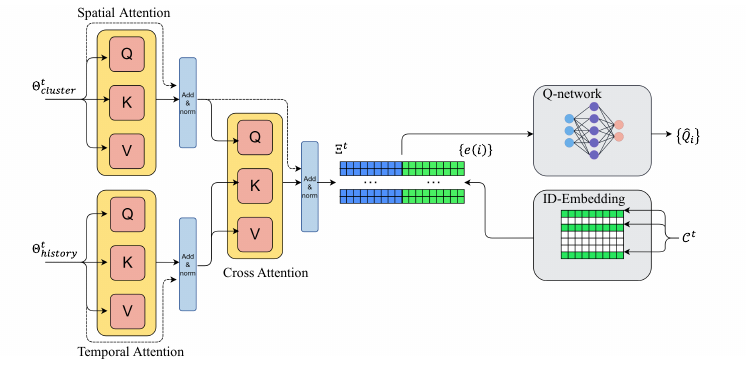}
    \caption{Architecture of the Spatio-Temporal Attention-based Q-Network.}
    \label{fig:network}
\end{figure}

This subsection presents our Spatio-Temporal Attention-based Q-Network, a novel architecture designed to estimate per-client Q-values under partial visibility constraints.

\subsubsection{Representing Temporal Context}
In our POMDP setting, the reward at each communication round is evaluated based on the aggregated global model after the client selection.
As a result, the effect of an action is tightly coupled with the aggregated global parameters.
To capture such temporal dependencies and strengthen the alignment between the reward signal and the Q-network input, we utilize a sequence of historical global parameters \(\left[W_{glob}^{t-H}, W_{glob}^{t-H+1}, \dots, W_{glob}^{t}\right]\) from the current and the most recent \(H\) communication rounds as temporal context.

Specifically, the historical global parameters serve as a compact summary of the recent training trajectory, which is more efficient computationally and contains all the reward-associated information in \(\left[o^{t-H}, a^{t-H}, \dots, o^{t-1}, a^{t-1}\right]\).

Therefore, the input of the Q-network can be represented by the following tuple:
\begin{equation}
    \begin{aligned}
    \hat{Q}\left(h_{t-H:t}, a\right) &\triangleq \hat{Q}\left(W^{t-H:t}_{glob}, o^t, a\right) \\
    &= \hat{Q}\left(W^{t-H:t}_{glob}, \{W_i^t\}_{i\in\mathcal{C}^{t}}, \mathcal{C}^{t}, a\right)
    \end{aligned}
\end{equation}

\subsubsection{Dimension Reduction of model parameters}
Both local and global model parameters are typically high-dimensional, making it computationally expensive to use the original parameters directly as input to the Q-network.

To obtain compact yet informative representations, we map the original model parameters into a low-dimensional feature space before further processing through Random Projection (RP):
\begin{equation}
    \label{eq:rp}
    \text{RP}: \mathbb{R}^{d_{\text{model}}} \mapsto \mathbb{R}^{d_{\text{feat}}},\quad \text{RP}\left(W\right) = \frac{1}{d_{\text{feat}}}\cdot P\cdot W
\end{equation}
where \(P \in \mathbb{R}^{d_{\text{feat}}\times d_{\text{model}}}\) is the projection matrix and its elements are independently sampled from the Standard Gaussian Distribution \(\mathcal{N}(0, 1)\).

A large body of existing works leverages Principal Component Analysis (PCA) as a dimension reduction method~\cite{fedagent,favor,flashrl}. Under partial visibility, however, the visible clients vary over time, making it uncertain for the server to collect sufficient and representative samples to initiate PCA. In contrast, RP possesses similar properties (linearity, near-orthogonality, etc.) without requiring pre-collected samples.

\subsubsection{Spatio-Temporal Attention Architecture}
Given a set of queries $\mathcal{Q}$, a set of keys $\mathcal{K}$, and values $\mathcal{V}$, the output of an attention head is defined as:
\begin{equation}
    \label{eq:attn}
    \text{Attn}\left(\mathcal{Q}, \mathcal{K}, \mathcal{V}\right) = \text{Softmax}\left(\frac{\mathcal{Q}^{T} \times \mathcal{K}}{\sqrt{d_{\text{token}}}}\right) \times \mathcal{V}
\end{equation}
where $\mathcal{Q}, \mathcal{K}, \mathcal{V}$ are obtained by multiplying the inputs with learnable projection matrices respectively.

In our work, each parameter vector of the local updated model \(W_{i}^{t}\) is first compressed via RP to obtain the feature vector \(\omega_{i}^{t}\).
We transform the feature vectors into the input tokens of attention heads \(\theta_{i}^{t}\) through a lightweight nonlinear encoder, denoted as \(f_{\text{enc}}(\cdot): \mathbb{R}^{d_{\text{feat}}}\mapsto \mathbb{R}^{d_{\text{token}}}, \theta_{i}^{t} = f_{\text{enc}}\left(\omega_{i}^{t}\right)\), to extract a semantic representation of the parameters and normalize their scales.

Based on the encoded tokens, we construct a Spatio-Temporal attention architecture to jointly capture inter-client relationships and temporal context. The overall architecture of the network is illustrated in Fig.~\ref{fig:network}.

At each communication round, a \emph{Spatial Attention} module is applied over the set of visible client tokens, allowing the Q-network to reason about the relative contribution of different clients within the same cluster.
This Spatial Attention module is permutation-invariant, so its output does not depend on the order of clients in the cluster.

To incorporate temporal context, the encoded tokens of the most recent global parameters are processed by a \emph{Temporal Attention} module.
This module enables the network to capture temporal dependencies and the relationship between varying rewards and parameter differences.
The Temporal Attention module is implemented with Positional Encoding and Causal Mask to learn the order of the tokens.

Finally, the processed spatial and temporal tokens are integrated through cross-attention, where client representations attend to the historical global context.
This design allows the Q-network to evaluate each client based on the temporal context, resulting in context-aware client features for subsequent Q-value estimation.

We add residual connections to each attention head for stable gradient back-propagation and a consistent latent space. The overall function of attention layers is formulated as:
\begin{align}
    \label{eq:attnlayers}
    \Xi^{t} &= \Theta_{\text{Spatial}}^{t} + \text{Attn}\left(\Theta_{\text{Spatial}}^{t}, \Theta_{\text{Temporal}}^{t}, \Theta_{\text{Temporal}}^{t}\right) \\
    \text{where} &\quad \Theta_{\text{Spatial}}^{t} = \Theta_{\text{cluster}}^{t} + \text{Self-Attn}\left(\Theta_{\text{cluster}}^{t}\right) \\
    \text{and} &\quad \Theta_{\text{Temporal}}^{t} = \Theta_{\text{history}}^{t} + \text{Self-Attn}\left(\Theta_{\text{history}}^{t}\right)
\end{align}
where the input \(\Theta_{\text{cluster}}^{t} = \left[\theta_{1}^{t}, \dots, \theta_{|\mathcal{C}^{t}|}^{t}\right]\) and \(\Theta_{\text{history}}^{t} = \left[\theta_{glob}^{t-H}, \dots, \theta_{glob}^{t}\right]\). The function \(\text{Self-Attn}(\cdot)\) equals \(\text{Attn}(\cdot,\cdot,\cdot)\).

\subsubsection{Identity-Aware Embedding and Memory}
Intermittent client participation under partial visibility induces temporal sparsity in observation trajectories, making it hard to track each client's contribution over time. To overcome this, we propose an Identity-Aware Embedding module that generates unique client identifiers via trainable identity vectors, enabling cross-round impact modeling despite intermittent participation. It is defined as:
\begin{equation}
    \label{eq:idemb}
    e(\cdot):\mathbb{N}\mapsto \mathbb{R}^{d_{\text{emb}}}
\end{equation}

These ID embeddings provide client-level memory, enabling the Q-network to capture persistent, client-specific characteristics across rounds.

\subsubsection{Dueling Q-Value Decomposition}
Finally, the Q-network adopts a dueling architecture to separately estimate the global value of the current training state and the relative advantage of selecting each client:
\begin{equation}
    \hat{Q}_{i}^{t} = \operatorname{Val}\left(\bar{\xi}^{t}, \bar{e}\right) + \operatorname{Adv}\left(\xi_{i}^{t}, e(i)\right) - \bar{\operatorname{Adv}}
\end{equation}
where \(\xi_{i}^{t}\) denotes the \(i^{\text{th}}\) vector of \(\Xi^{t}\), and variables with a bar denote the average over \(i\in\mathcal{C}^{t}\).

\subsection{Algorithm}
In this section, we present the workflow to integrate the proposed POMDP-based client selection strategy into the FL training process. To stabilize Q-value estimation, the $\varepsilon$-greedy action strategy and the Double-DQN architecture are widely utilized in RL applications~\cite{favor,fedagent}. The pseudocode of our proposed method is shown in Algorithm~\ref{alg:flow}.
\begin{algorithm}[!t]
\caption{FL client selection under partial visibility}
\label{alg:flow}
\begin{algorithmic}[1]
\REQUIRE Total rounds $T$, client set $\mathcal{U}$, selection size $K$, discount factor $\gamma$, soft-update coefficient $\tau$, exploration decay rate $e$
\STATE Initialize global model weights $W^0_{glob}$
\STATE Initialize History Buffer $\mathcal{H} \leftarrow \emptyset$ with maximum length $H$
\STATE Initialize Replay Buffer $\mathcal{B} \leftarrow \emptyset$ with maximum capacity $B$
\STATE Initialize Online Q-network $Q_\theta$ random weights $\theta$ and Target Q-network with $\bar{\theta} \leftarrow \theta$
\FOR{round $t = 0, 1, \dots, T-1$}
    \STATE \textbf{Local Update:} Each client $i \in \mathcal{C}^t$ downloads $W^t_{glob}$ and performs $k$ rounds of local SGD:
    \[W_i^t = W_{glob}^t - \eta \sum_{j=0}^{k-1} \nabla \ell(W^t_{i,j}; \mathcal{D}_i)\]
    \STATE \textbf{Observation and History Construction:} Build the Q-network input $s^t = (\mathcal{C}^t, \{\text{RP}(W_i^t)\}_{i \in \mathcal{C}^t}, \text{RP}(W^t_{glob}), \mathcal{H})$

    \STATE \textbf{$\mathbf{\varepsilon}$-Greedy Client Selection:} With $\varepsilon_t = \max(0.1, 1 - e \cdot t)$ and $p = 1 - \varepsilon_t$:
    \STATE \hskip\algorithmicindent with probability $p$: \hfill \COMMENT{Greedy selection}
    \STATE \hskip\algorithmicindent $\mathcal{S}^t = \arg\!\mathop{\mathrm{TopK}}_{i \in \mathcal{C}^t} Q_\theta(s^t)$
    \STATE \hskip\algorithmicindent with probability $1 - p$: \hfill \COMMENT{Exploration}
    \STATE \hskip\algorithmicindent $\mathcal{S}^t = \mathrm{Random}(\mathcal{C}^t, K)$

    \STATE \textbf{Aggregation:}
    \STATE Compute weighted average of selected clients' models:
    \[\hat{W}^{t+1}_{glob} = \frac{1}{\sum_{i \in \mathcal{S}^t} |\mathcal{D}_i|}\sum_{i \in \mathcal{S}^t} |\mathcal{D}_i| \, W_i^t\]
    \STATE Store $\hat{W}^{t+1}_{glob}$ into $\mathcal{H}$ and calculate $W^{t+1}_{glob}$ according to Eq.~\eqref{eq:tempAgg}
    \STATE \textbf{Reward:} Evaluate $W_{glob}^{t+1}$ and compute the reward $r^t$ according to Eq.~\eqref{eq:reward}
    \STATE \textbf{Agent Training:} Update $Q_\theta$ via Algorithm~\ref{alg:agent_training}

\ENDFOR
\end{algorithmic}
\end{algorithm}

\begin{algorithm}[!t]
\caption{Agent Training}
\label{alg:agent_training}
\begin{algorithmic}[1]
\REQUIRE Replay Buffer $\mathcal{B}$, Online Q-network $Q_\theta$, Target Q-network $Q_{\bar{\theta}}$, discount factor $\gamma$, soft-update coefficient $\tau$, training iterations $b$
\STATE Store transition $(s^t, \mathcal{S}^t, r^t)$ into $\mathcal{B}$
\FOR{$\text{iter} = 0, 1, \dots, b-1$}
\STATE Sample a mini-batch of $H$-step transitions from $\mathcal{B}$
\STATE Update Online Q-network parameters $\theta$ by minimizing $\mathcal{L}_{\text{DQN}}$ via SGD
\STATE Soft-update Target Q-network: $\bar{\theta} \leftarrow \tau \theta + (1 - \tau) \bar{\theta}$
\ENDFOR
\end{algorithmic}
\end{algorithm}
\subsubsection{$\varepsilon$-Greedy Action}
To ensure sufficient coverage of the action space for the Q-network, we employ a linearly decaying exploration probability $\varepsilon_t = \max(0.1, 1 - e \cdot t)$. In the early stages of training, a larger $\varepsilon_t$ promotes extensive exploration of the action space, allowing the Q-network to gather information about all clients. As training progresses, $\varepsilon_t$ decays to prioritize exploitation, which limits the accuracy loss from selecting suboptimal clients.
\subsubsection{Double-DQN}
The Double DQN (DDQN) architecture is widely applied in DQL tasks to mitigate the overestimation of \(Q\) caused by using the same Q-network for action selection and \(Q\) estimation~\cite{ddqn}.
The DDQN architecture maintains an online network parameterized by $\theta$ for action selection, and a target network parameterized by $\bar{\theta}$ to provide Q-value estimation.
The target network parameters $\bar{\theta}$ are softly updated with a smoothing factor $\tau \in (0,1)$, so that they track the online parameters $\theta$, which enhances training stability.

\section{Convergence Analysis}\label{sec:convergence}
In this section, we analyze the convergence of the global model under the proposed client selection method. A complete convergence guarantee for DQL remains an open problem~\cite{theodql,zhang2023convergence}. Therefore, we provide convergence guarantees for the global model conditioned on a fixed, stable Q-network. Under this condition, the client selection policy induced by the Q-network is treated as deterministic across communication rounds.

We first assume that both the FL objective function and the Q-function are Lipschitz-smooth (Assumption~\ref{ass:lsmooth_flow}), and that the data heterogeneity and the variance of stochastic gradients are bounded (Assumptions~\ref{ass:gradient_flow}--\ref{ass:niid_flow}). These are standard assumptions widely adopted in FL convergence analyses. We treat the Q-network as a function of the gradients and assume that the Q-function satisfies $\mu$-Quadratic Growth (Assumption~\ref{ass:qstrong}).
Based on these assumptions, we establish Theorem~\ref{th:conv_sel_flow}, which guarantees the convergence of the global model under the proposed selection strategy.


\begin{assumption}[$L$-Smoothness] \label{ass:lsmooth_flow}
    \textbf{(i)} The objective function $F(\cdot)$ is $L$-smooth, \ie there exists a Lipschitz constant $L$ such that for any $W, W'$:
    \begin{equation}
        F(W) - F(W') \leq \inner{\nabla F(W')}{W-W'} + \frac{L}{2} \normsq{W-W'}
    \end{equation}
    \textbf{(ii)} The Q-function $Q(\cdot)$ is $L_Q$-smooth, \ie $\nabla Q$ is $L_Q$-Lipschitz continuous:
    \begin{equation}
        Q(y) \geq Q(x) + \inner{\nabla Q(x)}{y-x} - \frac{L_Q}{2}\normsq{y-x}
    \end{equation}
\end{assumption}

\begin{assumption}[Bounded Stochastic Gradient] \label{ass:gradient_flow}
    The norm and variance of the stochastic gradients $\tilde{g}_{n,j}^t$ are bounded by $G^2$ and $\sigma^2$, respectively:
    \begin{align}
        &\E{\normsq{\tilde{g}_{n,j}^t}} \leq G^2 \\
        &\E{\normsq{\tilde{g}_{n,j}^t - \E{\tilde{g}_{n,j}^t}}} \leq \sigma^2
    \end{align}
\end{assumption}

\begin{assumption}[Data Heterogeneity] \label{ass:niid_flow}
    The cluster-level objective is defined as $F_{\mathcal{C}^t}(\cdot)\triangleq \frac{1}{|\mathcal{C}^t|}\sum_{n\in\mathcal{C}^t} F_n(\cdot)$. We bound both the inter-cluster and intra-cluster heterogeneity: \\
    \textbf{(i)} The discrepancy between the global gradient and each cluster's gradient is bounded by $d_{\mathcal{C}^t}^2$:
    \begin{equation}
        \E{\normsq{\nabla F(W_{glob}^t) - \nabla F_{\mathcal{C}^t}(W_{glob}^t)}} \leq d_{\mathcal{C}^t}^2
    \end{equation}
    \textbf{(ii)} The intra-cluster gradient variance is bounded by $\Gamma_{\mathcal{C}^t}^2$:
    \begin{equation}
        \mathbb{E}_{n \in \mathcal{C}^t}\left[{\normsq{\nabla F_n(W_{glob}^t) - \nabla F_{\mathcal{C}^t}(W_{glob}^t)}}\right] \leq \Gamma_{\mathcal{C}^t}^2
    \end{equation}
\end{assumption}

\begin{assumption}[Quadratic Growth of $Q$] \label{ass:qstrong}
    For any $x$:
    \begin{equation}
        \normsq{x - v^*} \leq \frac{2}{\mu}\left(Q(v^*) - Q(x)\right)
    \end{equation}
    where $v^* = \arg\max_v Q(v)$ and $\mu > 0$ is the quadratic growth constant.
\end{assumption}

\begin{remark}[On Assumption~\ref{ass:qstrong}]
    Assumption~\ref{ass:qstrong} implies the existence of a maximum for the Q-function, which can be ensured in practice by clipping the Q-network output, a common practice in DQL. The assumption only constrains the sharpness of the
    Q-function and is substantially weaker than requiring \(\mu\)-strong
    concavity, serving as a commonly adopted condition in convergence
    analyses.
\end{remark}


\begin{theorem}[Convergence with DQL-based Selection] \label{th:conv_sel_flow}
    For client selection ratio $c = |\mathcal{S}^t| / |\mathcal{C}^t|$, learning rate $\eta$ satisfying $Lk\eta < 1$, under Assumptions~\ref{ass:lsmooth_flow}--\ref{ass:qstrong}:
    \begin{equation}
        \begin{split}
            \frac{1}{T}\sum_{t=0}^{T-1}\E{\normsq{\nabla F(W_{glob}^t)}} \leq{} &\frac{4}{k\eta T}\left(F(W_{glob}^0) - F^*\right) \\
            &+ \mathbb{C} + \frac{4L^2k^2\eta^2G^2}{3}
        \end{split}
        \label{eq:conv_sel_flow}
    \end{equation}
    where $\mathbb{C} = 4\bar{d}^2 + \frac{8L_Q}{\mu c}\bar{\Gamma}^2 + \frac{16(L_Q + \mu c)}{\mu^2 c}\bar{\Delta}_Q^t + \frac{2L\eta\sigma^2}{c\bar{|\mathcal{C}|}}$, with time-averaged quantities $\bar{X} \triangleq \frac{1}{T}\sum_{t=0}^{T-1} X_t$ (\eg $\bar{d}^2$, $\bar{\Gamma}^2$, $\bar{|\mathcal{C}|}$), and $\Delta_Q^t \triangleq Q^* - Q(\nabla F_{\mathcal{C}^t}) \geq 0$ is the gap between Q's maximum value and the value of the current cluster.
\end{theorem}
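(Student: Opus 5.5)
We follow the standard one-round descent template for local-SGD with partial participation. The plan is to bound the progress of a single round and then telescope. We write the aggregated update as $W_{glob}^{t+1}-W_{glob}^t = -\eta\sum_{j=0}^{k-1} g^t$, where $g^t$ is the weighted average over $i\in\mathcal{S}^t$ of the local stochastic gradients $\tilde g_{i,j}^t$. The temporal averaging of Eq.~\eqref{eq:tempAgg} is handled as a convex combination: by $L$-smoothness, $F$ is bounded along the window. Alternatively, we analyze the unaveraged iterate and note that averaging only contracts the step.

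We apply Assumption~\ref{ass:lsmooth_flow}(i) to get
\[
\E{F(W_{glob}^{t+1})} \le F(W_{glob}^t) - k\eta\,\E{\inner{\nabla F(W_{glob}^t)}{\bar g^t}} + \frac{L}{2}\E{\normsq{W_{glob}^{t+1}-W_{glob}^t}}.
\]
Here $\bar g^t$ is the averaged expected gradient. The inner product is split with $2\inner{a}{b}=\normsq{a}+\normsq{b}-\normsq{a-b}$, producing $-\frac{k\eta}{2}\normsq{\nabla F(W_{glob}^t)}$ plus a drift/bias term $\frac{k\eta}{2}\normsq{\nabla F(W_{glob}^t)-\bar g^t}$. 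We decompose the bias through three intermediate points:
\[
\nabla F \to \nabla F_{\mathcal{C}^t} \to \nabla F_{\mathcal{S}^t}(W_{glob}^t) \to \bar g^t.
\]
Using $\normsq{a+b+c}\le 3(\cdots)$ or $2(\cdots)$ repeatedly gives three pieces. The first piece is the inter-cluster heterogeneity, bounded by $d_{\mathcal{C}^t}^2$ via Assumption~\ref{ass:niid_flow}(i); it yields the $4\bar d^2$ term. The second piece is the selection error $\normsq{\nabla F_{\mathcal{C}^t}-\nabla F_{\mathcal{S}^t}}$. The third piece is the local drift $\normsq{\nabla F_{\mathcal{S}^t}(W_{glob}^t)-\nabla F_{\mathcal{S}^t}(W_{i,j}^t)}\le L^2\normsq{W_{glob}^t-W_{i,j}^t}\le L^2 j^2\eta^2G^2$. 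Summing $j^2$ over $j<k$ and using $Lk\eta<1$ gives the $\frac{4L^2k^2\eta^2G^2}{3}$ term. The quadratic term of the descent inequality is split into mean and variance. The variance of the average over $|\mathcal{S}^t|=c|\mathcal{C}^t|$ independent clients contributes $\frac{L\eta\sigma^2}{c|\mathcal{C}^t|}$ per step. The mean part is absorbed into the $-\frac{k\eta}{2}\normsq{\nabla F}$ term using $Lk\eta<1$, which leaves a factor like $-\frac{k\eta}{4}$.

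The crux is the selection error, and this is where Assumptions~\ref{ass:lsmooth_flow}(ii) and~\ref{ass:qstrong} enter. We treat $Q$ as a function of a client's gradient. The greedy top-$K$ rule selects the clients with the largest $Q(\nabla F_i)$, so the selected clients satisfy $Q(\nabla F_i)\ge$ the cluster-average $Q$, roughly. For each selected $i$ we apply quadratic growth around $v^*$:
\[
\normsq{\nabla F_i - v^*}\le \tfrac{2}{\mu}\bigl(Q^*-Q(\nabla F_i)\bigr).
\]
We similarly have $\normsq{\nabla F_{\mathcal{C}^t}-v^*}\le\frac{2}{\mu}\Delta_Q^t$, which gives
\[
\normsq{\nabla F_{\mathcal{C}^t}-\nabla F_{\mathcal{S}^t}}\le 2\normsq{\nabla F_{\mathcal{C}^t}-v^*}+2\normsq{\nabla F_{\mathcal{S}^t}-v^*}.
\]
To control $Q^*-Q(\nabla F_i)$ for selected clients, we use the smoothness lower bound of Assumption~\ref{ass:lsmooth_flow}(ii) expanded at $\nabla F_{\mathcal{C}^t}$. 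Averaging over the cluster kills the linear term and gives
\[
\frac{1}{|\mathcal{C}^t|}\sum_{n\in\mathcal{C}^t}Q(\nabla F_n)\ge Q(\nabla F_{\mathcal{C}^t})-\frac{L_Q}{2}\Gamma_{\mathcal{C}^t}^2.
\]
Since the top-$K$ average dominates the cluster average, and summing over at most a $1/c$ fraction costs a factor $1/c$, we obtain $Q^*-\bar Q_{\mathcal{S}^t}\le \frac{1}{c}\bigl(\Delta_Q^t+\frac{L_Q}{2}\Gamma^2_{\mathcal{C}^t}\bigr)$. Combining these yields the terms $\frac{8L_Q}{\mu c}\Gamma^2$ and $\frac{16(L_Q+\mu c)}{\mu^2c}\Delta_Q^t$. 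We expect the exact constants here to be the fiddliest part, including the extra $L_Q/\mu$ factor, which should come from a second application of quadratic growth when converting $\Delta_Q$-type gaps.

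Finally, we rearrange so that $\frac{k\eta}{4}\E{\normsq{\nabla F(W_{glob}^t)}}\le \E{F(W_{glob}^t)-F(W_{glob}^{t+1})}+k\eta(\text{error}_t)$, sum over $t=0,\dots,T-1$, bound $F(W_{glob}^T)\ge F^*$, and divide by $k\eta T/4$. The per-round heterogeneity, $Q$-gap and cluster-size quantities become the time averages $\bar d^2$, $\bar\Gamma^2$, $\bar\Delta_Q$ and $\bar{|\mathcal{C}|}$. The main obstacle is the selection step: it requires making rigorous that the top-$K$-by-$Q$ rule with a fixed Q-network relates selected clients' gradients to the cluster mean only through $\Delta_Q^t$ and $\Gamma^2_{\mathcal{C}^t}$. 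We would first try the averaging argument above, treating $\mathcal{S}^t$ as deterministic given $\mathcal{C}^t$.
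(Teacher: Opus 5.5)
Your outline is sound and follows the paper's skeleton: the $L$-smooth descent step; a bias split into inter-cluster heterogeneity, selection error and local drift (drift bounded as in Lemma~\ref{lem:3_flow}); a mean/variance split of the second-moment term with the mean part cancelled via $Lk\eta<1$; a separate selection lemma; and telescoping. Two steps genuinely differ.

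First, the inner product. The paper splits $\inner{\nabla F}{\tilde g^t_{n,j}}$ into three terms and combines Young's inequality (Lemma~\ref{lem:1_flow}) with polarization (Lemma~\ref{lem:2_flow}); that is where $-\frac{k\eta}{4}$ comes from. You polarize once against the averaged expected update $\bar g^t$, which is cleaner bookkeeping. You never have to bound a single client's stochastic gradient against the cluster mean, whereas Lemma~\ref{lem:1_flow} bounds $\E{\normsq{\nabla F_{\mathcal{C}^t}-\tilde g^t_{n,j}}}$ by drift alone, dropping noise and heterogeneity. Also, the $-\frac{k\eta}{2}\normsq{\bar g^t}$ that polarization produces is evaluated at the local iterates, exactly like the mean part of the second-moment term; the paper instead cancels against $\normsq{\nabla F_{\mathcal{S}^t}(W^t_{glob})}$.

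Second, the selection lemma. The paper expands the $L_Q$-smoothness bound at $v^*$ (where $\nabla Q(v^*)=0$), uses the bias--variance identity, and applies quadratic growth a second time to $\normsq{v^*-\nabla F_{\mathcal{C}^t}}$. This gives $Q^*-\bar Q\le\frac{L_Q}{2}\Gamma^2_{\mathcal{C}^t}+\frac{L_Q}{\mu}\Delta^t_Q$, and that second application is exactly the $L_Q/\mu$ you anticipated. Your expansion at $\nabla F_{\mathcal{C}^t}$, where the linear term averages out, gives the sharper $Q^*-\bar Q\le\Delta^t_Q+\frac{L_Q}{2}\Gamma^2_{\mathcal{C}^t}$. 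Hence $\normsq{\nabla F_{\mathcal{C}^t}-\nabla F_{\mathcal{S}^t}}\le\frac{2L_Q}{\mu c}\Gamma^2_{\mathcal{C}^t}+\frac{4(1+c)}{\mu c}\Delta^t_Q$.

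That is not literally the stated coefficient, so you need one more line. Assumption~\ref{ass:lsmooth_flow}(ii) at $v^*$ together with Assumption~\ref{ass:qstrong} gives $\frac{\mu}{2}\normsq{x-v^*}\le Q^*-Q(x)\le\frac{L_Q}{2}\normsq{x-v^*}$. So $\mu\le L_Q$, and therefore $\frac{4(1+c)}{\mu c}\le\frac{4(L_Q+\mu c)}{\mu^2 c}$. The $1/c$ is not even needed: top-$K$ selection already gives $\bar Q_{\mathcal{S}}\ge\bar Q$, and the paper's $1/c$ comes from a convex-combination bound that holds for any subset.

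One step fails as you wrote it. Your polarization step lists only $-\frac{k\eta}{2}\normsq{\nabla F}$ and the bias term, and you then absorb the mean part $\frac{Lk^2\eta^2}{2}\normsq{\bar g^t}$ into $-\frac{k\eta}{2}\normsq{\nabla F}$. Going through $\normsq{\bar g^t}\le 2\normsq{\nabla F}+2\normsq{\nabla F-\bar g^t}$, this leaves $-\frac{k\eta}{4}$ only if $Lk\eta\le\frac14$, which $Lk\eta<1$ does not give. Instead, keep the $-\frac{k\eta}{2}\normsq{\bar g^t}$ that polarization produces. The net coefficient of $\normsq{\bar g^t}$ is then $\frac{k\eta}{2}(Lk\eta-1)<0$, the analogue of the paper discarding $\frac{\eta}{2}(Lk\eta-1)\sum_j\normsq{\nabla F_{\mathcal{S}^t}}$. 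You retain $-\frac{k\eta}{2}\normsq{\nabla F}$. After dividing by $\frac{k\eta}{2}$, with either of your splits (factors at most $4$), every term is at most its counterpart in Eq.~\eqref{eq:conv_sel_flow}. The theorem then follows as a weaker consequence rather than with identical constants.

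Two smaller points. Your convex-combination remark about Eq.~\eqref{eq:tempAgg} is not an argument; the paper's proof simply analyzes the unaveraged update, so commit to your fallback. And replacing $\frac{1}{T}\sum_t 1/|\mathcal{C}^t|$ by $1/\bar{|\mathcal{C}|}$, which both you and the paper do, goes the wrong way by Jensen's inequality unless $|\mathcal{C}^t|$ is constant.
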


\begin{remark}
    In the bound of Theorem~\ref{th:conv_sel_flow}, the term \(\mathbb{C}\) captures the error induced by varying visibility across communication rounds, appearing in time-averaged form. \(\mathbb{C}\) decreases when the
    data distributions within each cluster are more homogeneous and remain
    consistent across rounds, \ie when each cluster is approximately
    representative of the global population. A smaller \(\mathbb{C}\)
    tightens the gradient bound, implying that the global objective
    converges more closely to the optimum, consistent with the intuition that
    partial visibility has limited impact when clusters are well-balanced.
\end{remark}

The complete proofs are provided in Appendix~\ref{app:conv_sel_flow}.

\section{Experimental Analysis}
\subsection{Datasets}
We conduct experiments on three representative datasets:
\begin{itemize}
    \item \textbf{CIFAR-10}~\cite{cifar10}: A standard image classification dataset containing 60,000 RGB images in 10 classes, with 50,000 training images and 10,000 test images.
    \item \textbf{Fashion-MNIST}~\cite{fashionmnist}: A dataset of grayscale images, consisting of 60,000 training images and 10,000 test images across 10 fashion categories.
    \item \textbf{UCI-HAR}~\cite{uci}: A dataset of accelerometer signals collected from smartphones of 30 subjects. The target is to detect human activities, \eg walking and sitting.
\end{itemize}

\subsection{Data Heterogeneity Simulation}
To evaluate the robustness of our method against severe data heterogeneity, we construct the non-IID distributions of local datasets with the following strategies:
\begin{itemize}
    \item \textbf{Dirichlet Distribution}: A widely used non-IID setting, partitioning the datasets among clients using a Dirichlet distribution with concentration parameter \(\alpha\). We set \(\alpha=0.1\) in our experiments to create heterogeneous distributions.
    \item \textbf{Label Skew Distribution}: We assign each client a subset of classes, ensuring that each client only has data from exactly two classes. This simulates scenarios where each client has access to extremely limited types of data.
\end{itemize}

\subsection{Partial Visibility Settings}
To simulate real-world partial visibility scenarios, we design two visibility patterns:
\begin{itemize}
    \item \textbf{Mobile Server (MS)}: The server moves across different network regions (Fig.~\ref{subfig:mobile}). Clients are grouped into clusters, and the server randomly connects to one of the clusters at each communication round.
    \item \textbf{Random Availability (RA)}: All clients have an identical probability \(p\) of being available at each communication round (Fig.~\ref{subfig:random}). The selection size is \(\min(|\mathcal{C}^t|, K)\) under this setting.
\end{itemize}

\subsection{Baselines}
We compare our method with the following baselines:
\begin{itemize}
    \item \textbf{FedProx}~\cite{fedprox}: A widely used federated learning algorithm that introduces a proximal term to the local objective to mitigate the impact of data heterogeneity.
    \item \textbf{HA-EdgeFLow}~\cite{edgeflow}: A method designed for federated learning with a mobile server, which selects clients based on the norm of gradient differences.\footnote{https://github.com/hqj-les30/HA-EdgeFLow}.
    \item \textbf{FedAWAC}~\cite{fedawac}: A method designed for overcoming catastrophic forgetting in FL, which assigns different weights to clients based on the variance of the logits predicted by local models and averages the global models of recent communication rounds for aggregation.
    \item \textbf{F3AST}~\cite{f3ast}: A method designed to tackle partial visibility, which adaptively adjusts selection probabilities based on visibility statistics to minimize sampling variance asymptotically.\footnote{https://github.com/mriberodiaz/f3ast}.
\end{itemize}

\subsection{Implementation Details}
All experiments are implemented in PyTorch 2.4.0. For the CIFAR-10 and Fashion-MNIST datasets, we set \(N = 100\), and each client has the same local dataset size. For the UCI-HAR dataset, we consider each subject as a client such that \(N=21\) and the local data is naturally non-IID.
Each selected client performs local training using SGD with a learning rate of 0.001 and a batch size of 64 for 3 local epochs. For the DQN agent, the transitions are cached in a replay buffer with a length of 600. We set the discount factor to \(\gamma=0.9\). We adopt the Double-DQN strategy, where the soft-update rate is set to \(0.005\). Unless otherwise specified, models are trained for 1,500 communication rounds on CIFAR-10 and 600 rounds on Fashion-MNIST, with the client selection size \(K\) fixed to 5 by default.
In the MS scenario, each cluster contains 10 clients, while in the RA scenario, each client is visible with a probability of \(p=0.1\). Our implementation is publicly available at \url{https://github.com/hqj-les30/spattn}.

\begin{table*}[!tb]
    \centering
    \begin{tabular}{@{}lllrrrrr@{}}
    \toprule
Dataset & Visibility & Heterogeneity & FedProx & HA-EdgeFLow & FedAWAC & F3AST & Ours \\ \midrule
CIFAR-10 & MS & Dirichlet & 65.20\(\pm\)7.20 & 65.96\(\pm\)5.29 & 74.07\(\pm\)3.56 & 70.77\(\pm\)4.76 & \textbf{76.16\(\pm\)1.76} \\
CIFAR-10 & MS & Label Skew & 41.46\(\pm\)5.75 & 37.10\(\pm\)6.00 & 44.99\(\pm\)5.23 & 40.15\(\pm\)5.87 & \textbf{50.99\(\pm\)3.20} \\
CIFAR-10 & RA & Dirichlet & 62.84\(\pm\)7.61 & 62.97\(\pm\)7.73 & 73.71\(\pm\)3.64 & 70.85\(\pm\)5.69 & \textbf{75.13\(\pm\)2.65} \\
CIFAR-10 & RA & Label Skew & 40.74\(\pm\)4.95 & 39.98\(\pm\)5.75 & 44.82\(\pm\)4.80 & 39.30\(\pm\)5.48 & \textbf{52.60\(\pm\)3.24} \\
Fashion & MS & Dirichlet & 79.29\(\pm\)5.82 & 78.28\(\pm\)6.25 & 83.39\(\pm\)3.51 & 82.72\(\pm\)3.54 & \textbf{85.21\(\pm\)1.91} \\
Fashion & MS & Label Skew & 62.76\(\pm\)8.92 & 56.90\(\pm\)6.28 & 63.13\(\pm\)7.97 & 63.81\(\pm\)8.18 & \textbf{69.83\(\pm\)7.01} \\
Fashion & RA & Dirichlet & 79.15\(\pm\)5.38 & 80.34\(\pm\)4.96 & \textbf{84.58\(\pm\)3.10} & 81.94\(\pm\)4.09 & 83.87\(\pm\)2.95 \\
Fashion & RA & Label Skew & 62.55\(\pm\)8.81 & 56.12\(\pm\)6.86 & 63.75\(\pm\)8.41 & 61.40\(\pm\)9.04 & \textbf{65.33\(\pm\)6.07} \\
UCI-HAR & MS &  & 91.83\(\pm\)2.67 & 89.76\(\pm\)1.24 & \textbf{91.90\(\pm\)2.39} & 90.40\(\pm\)2.53 & 90.95\(\pm\)1.73 \\
UCI-HAR & RA &  & 92.11\(\pm\)1.74 & 89.90\(\pm\)2.91 & \textbf{92.38\(\pm\)1.97} & 90.25\(\pm\)2.22 & 90.73\(\pm\)1.29 \\
 \bottomrule
    \end{tabular}
    \vspace{0.5em}
    \caption{Test Accuracy of Different Methods Under Different Settings.\label{tab:acc}}
    \vspace{-1.0em}
    \footnotesize
    Results are reported as mean\(\pm\)std obtained from 5 independent runs with different random seeds. Best results are shown in \textbf{bold}. For each seed, we reported the average accuracy of the last 50 communication rounds.
\end{table*}

\begin{figure*}[!t]
    \centering
    \includegraphics[width=0.85\textwidth]{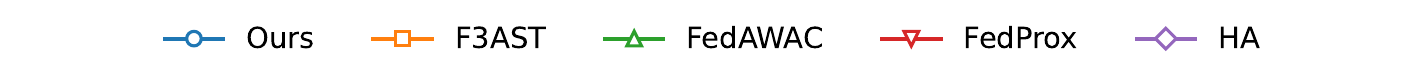}
    \vspace{-0.5em} 

    \subfloat[Fashion, Dirichlet, MS]{%
        \includegraphics[width=0.23\textwidth]{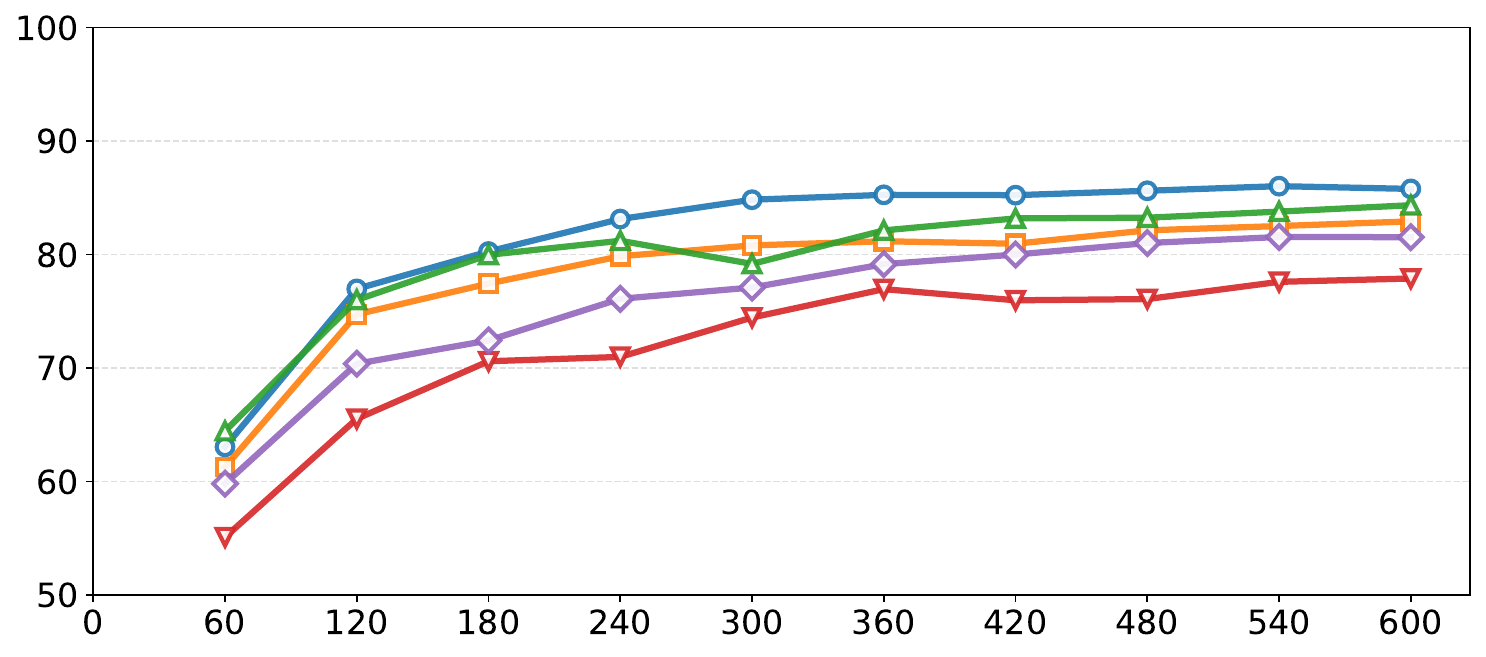}%
        \label{subfig:clsfmdiri}%
    }
    \hfil
    \subfloat[CIFAR-10, Dirichlet, MS]{%
        \includegraphics[width=0.23\textwidth]{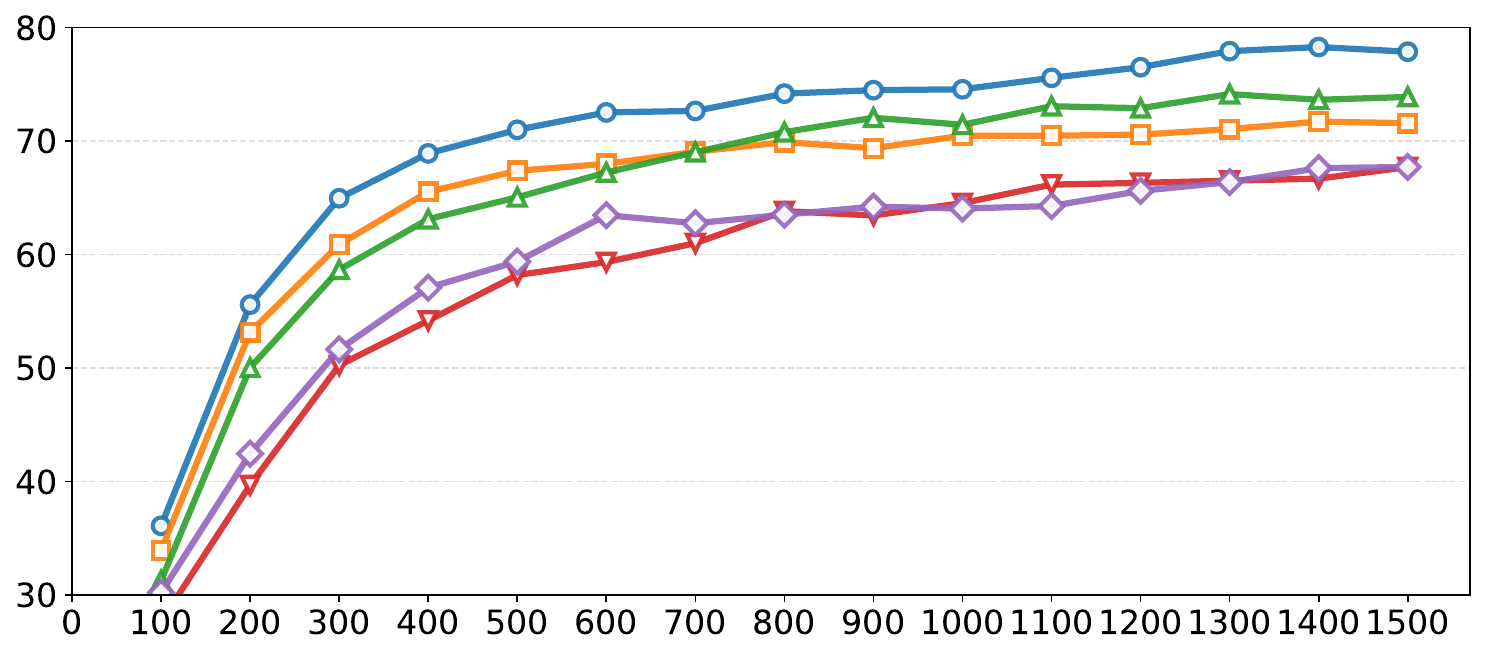}%
        \label{subfig:clscfdiri}%
    }
    \hfil
    \subfloat[Fashion, Label~Skew, MS]{%
        \includegraphics[width=0.23\textwidth]{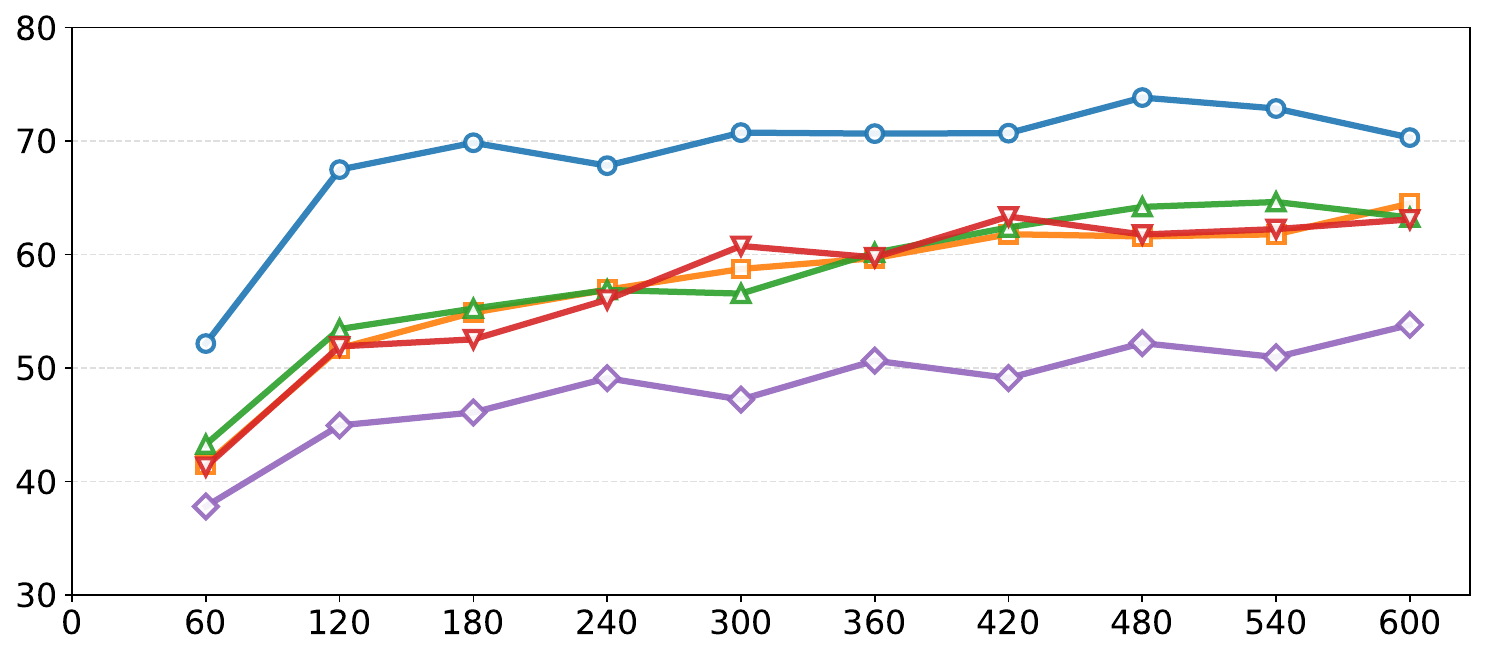}%
        \label{subfig:clsfmskew}%
    }
    \hfil
    \subfloat[CIFAR-10, Label~Skew, MS]{%
        \includegraphics[width=0.23\textwidth]{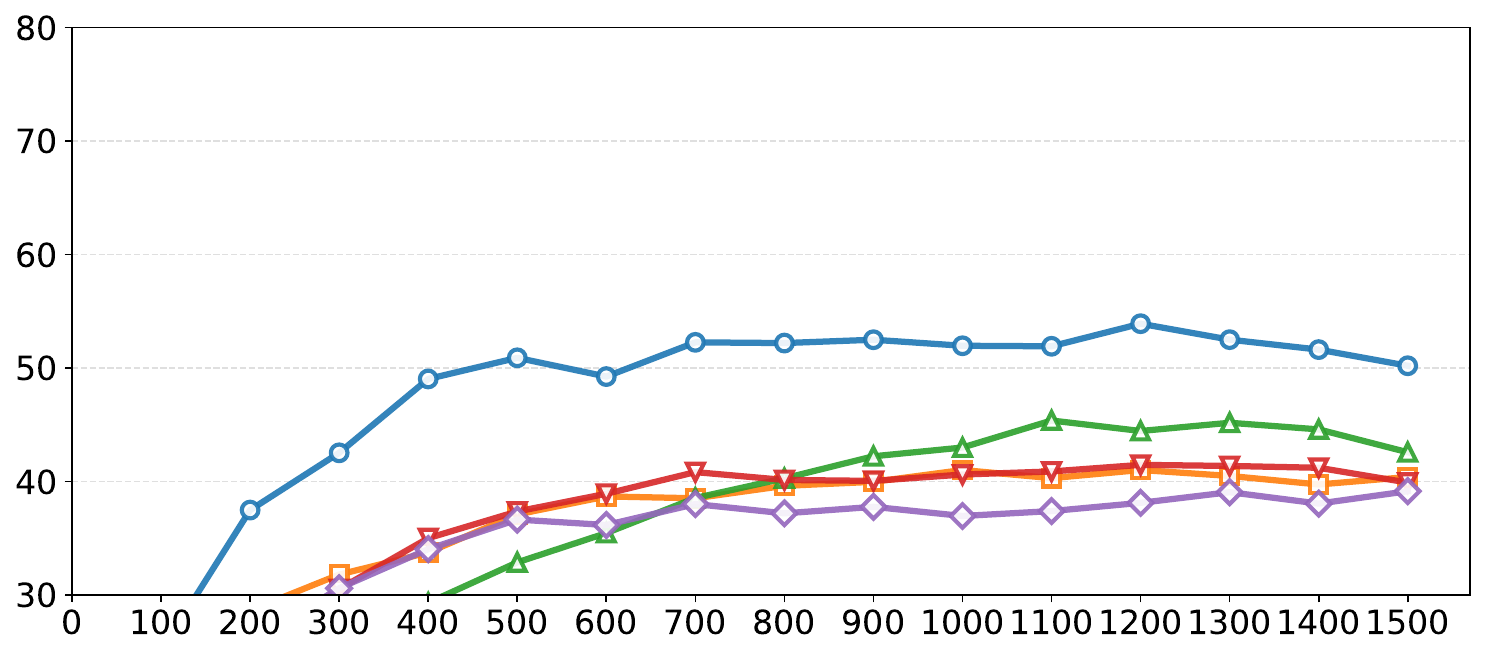}%
        \label{subfig:clscfskew}%
    }

    \subfloat[Fashion, Dirichlet, RA]{%
        \includegraphics[width=0.23\textwidth]{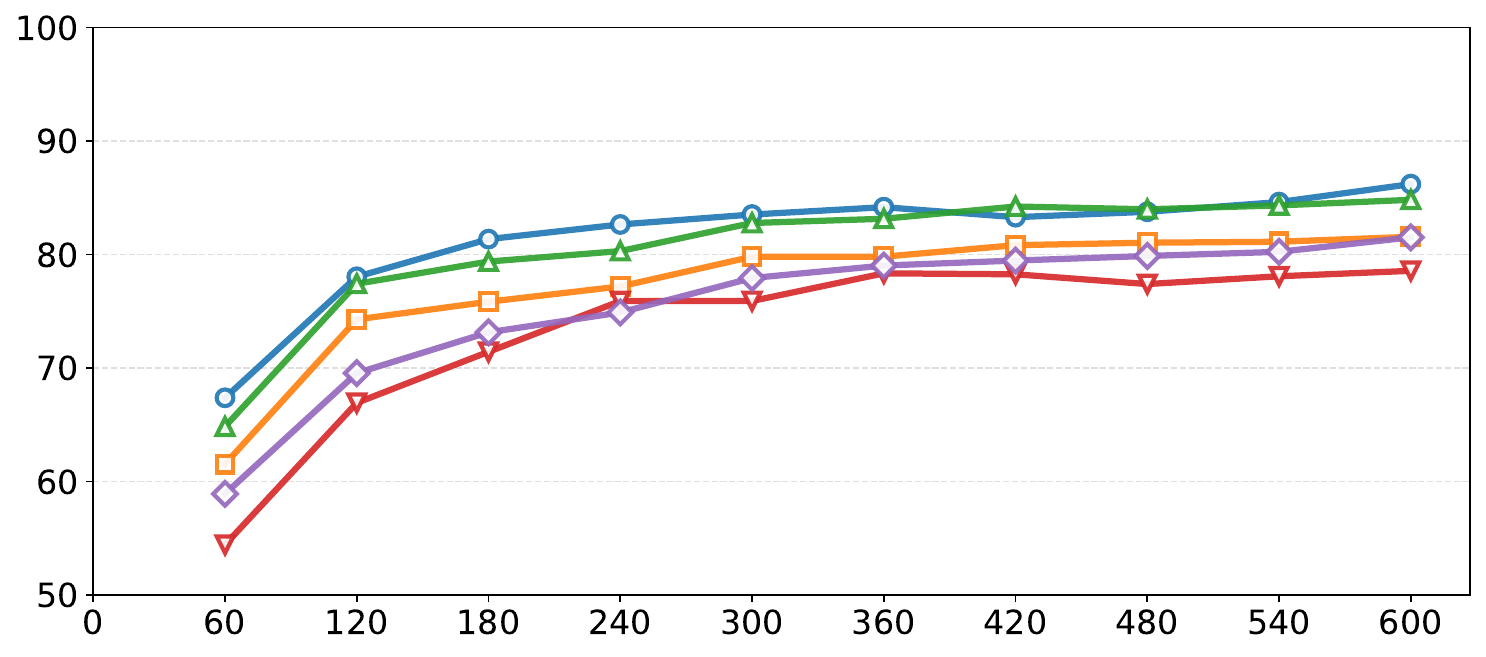}%
        \label{subfig:rafmdiri}%
    }
    \hfil
    \subfloat[CIFAR-10, Dirichlet, RA]{%
        \includegraphics[width=0.23\textwidth]{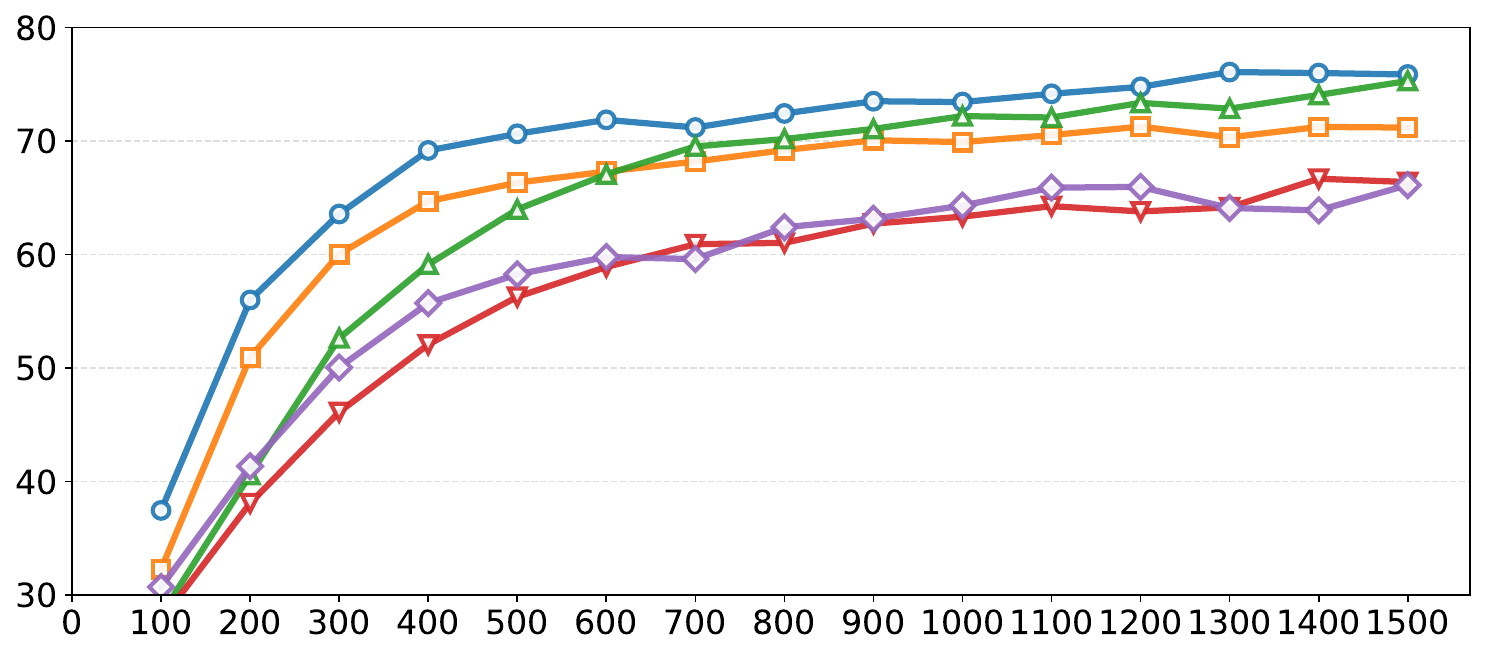}%
        \label{subfig:racfdiri}%
    }
    \hfil
    \subfloat[Fashion, Label~Skew, RA]{%
        \includegraphics[width=0.23\textwidth]{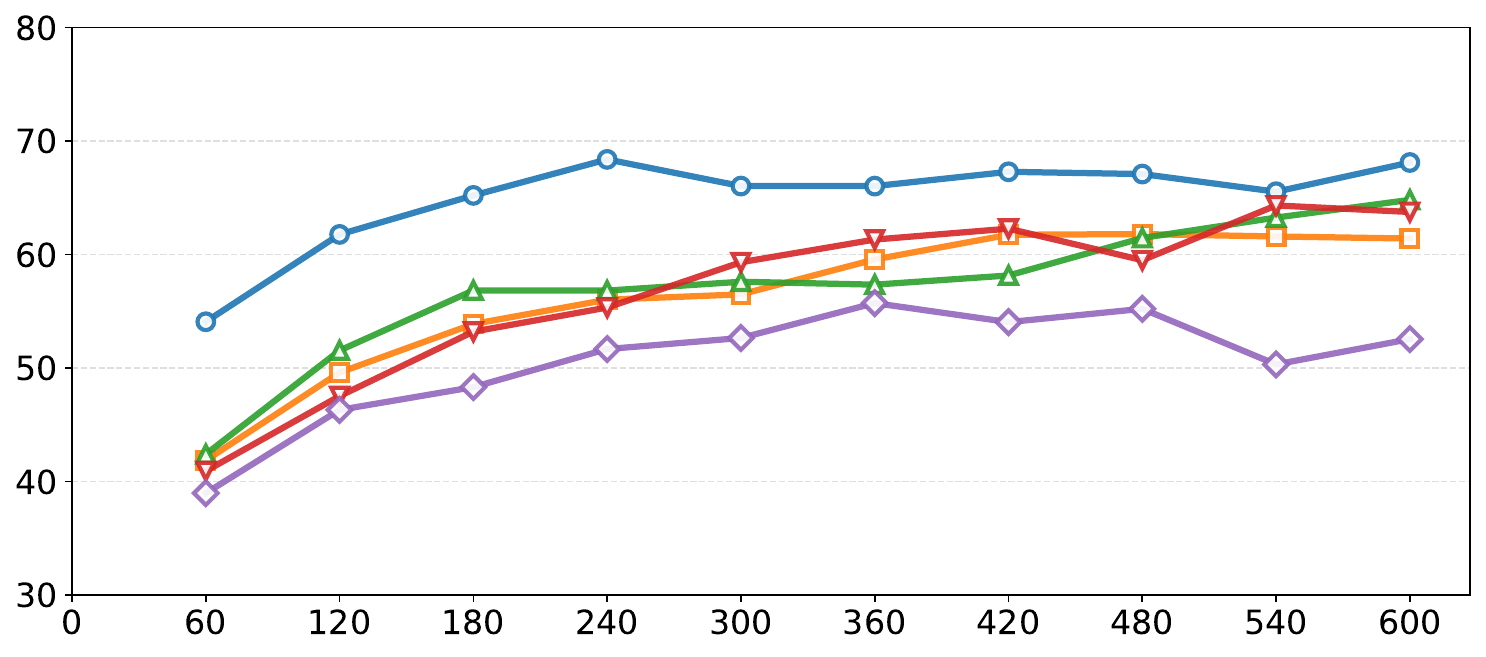}%
        \label{subfig:rafmskew}%
    }
    \hfil
    \subfloat[CIFAR-10, Label~Skew, RA]{%
        \includegraphics[width=0.23\textwidth]{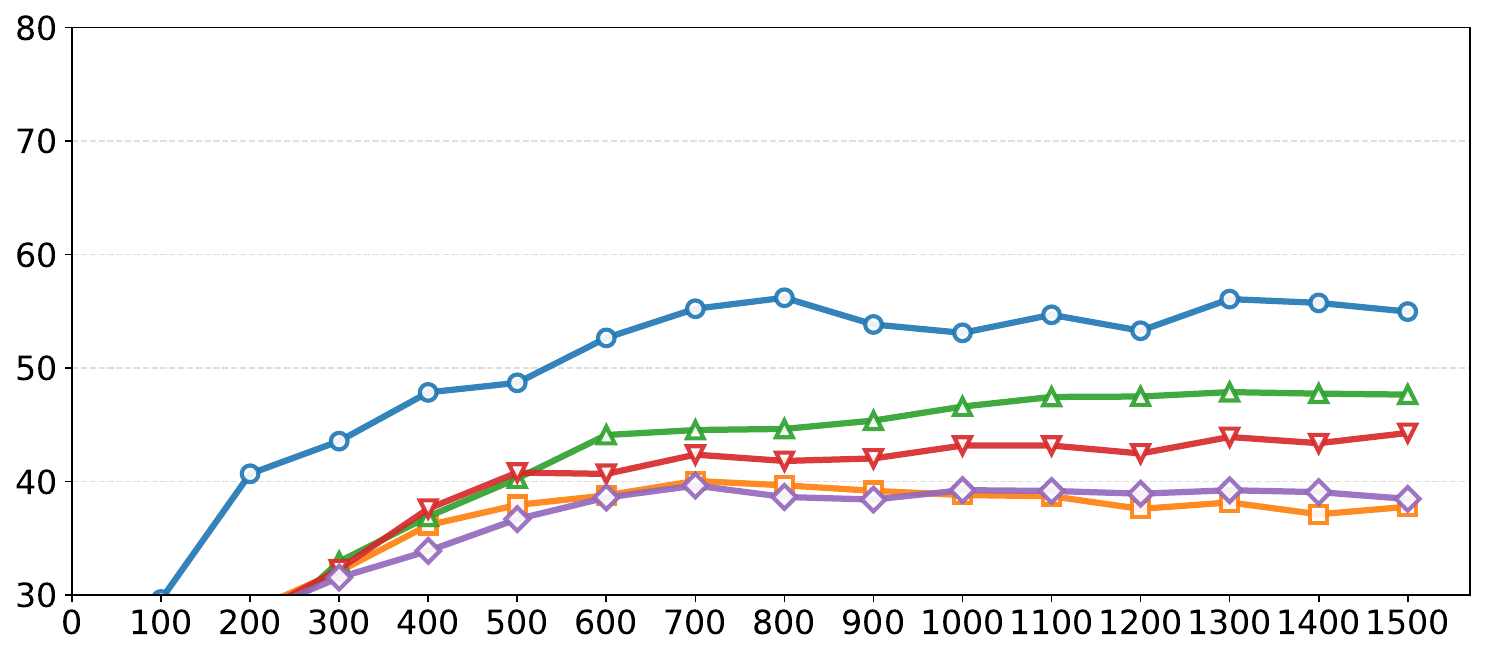}%
        \label{subfig:racfskew}%
    }
    \caption{Accuracy versus communication rounds under various settings.\label{fig:baseline_curve}}
\end{figure*}

\begin{figure}[!t]
    \centering

    \includegraphics[width=0.8\columnwidth]{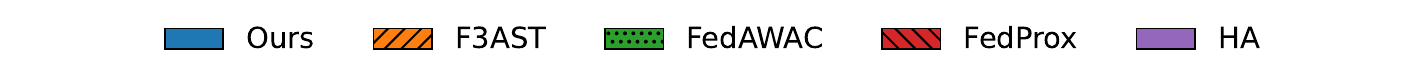}
    \vspace{-0.2em} 

    \subfloat[Dirichlet]{%
        \includegraphics[width=0.9\columnwidth]{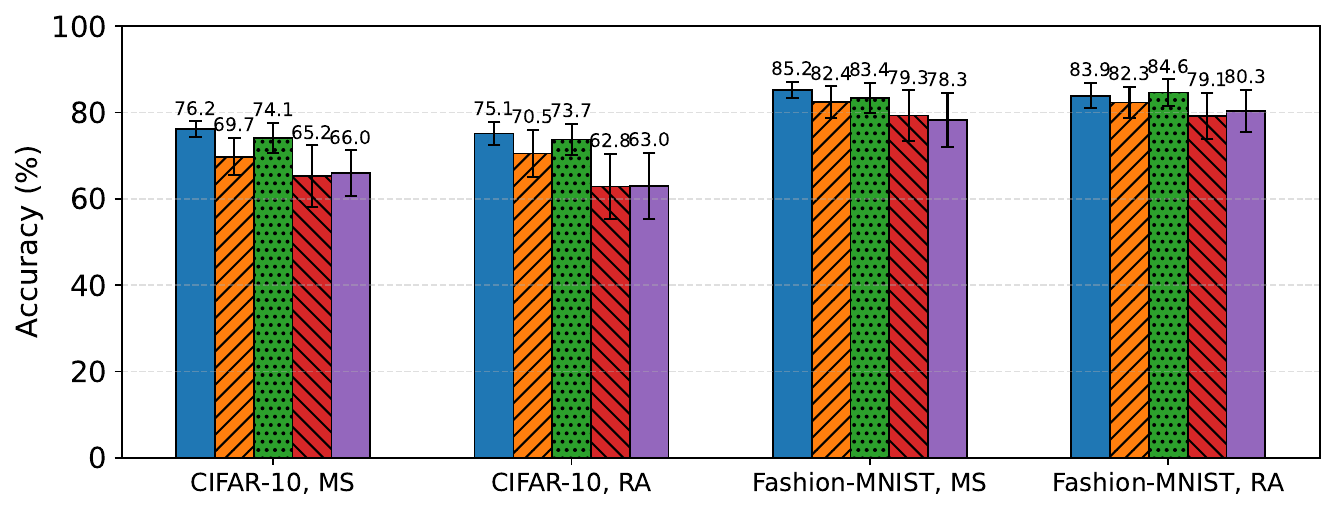}%
        \label{subfig:pillar1}%
    }\\ 
    \vspace{-0.3em} 

    \subfloat[Label-Skew]{%
        \includegraphics[width=0.9\columnwidth]{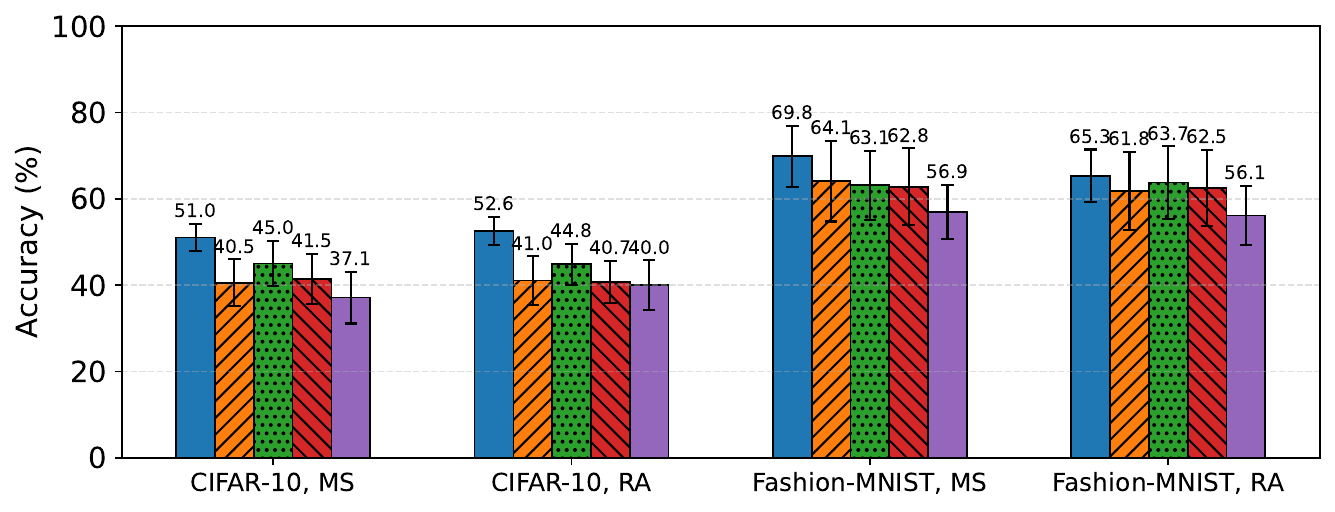}%
        \label{subfig:pillar2}%
    }\\ 
    \vspace{-0.3em}

    \subfloat[UCI-HAR]{%
        \includegraphics[width=0.9\columnwidth]{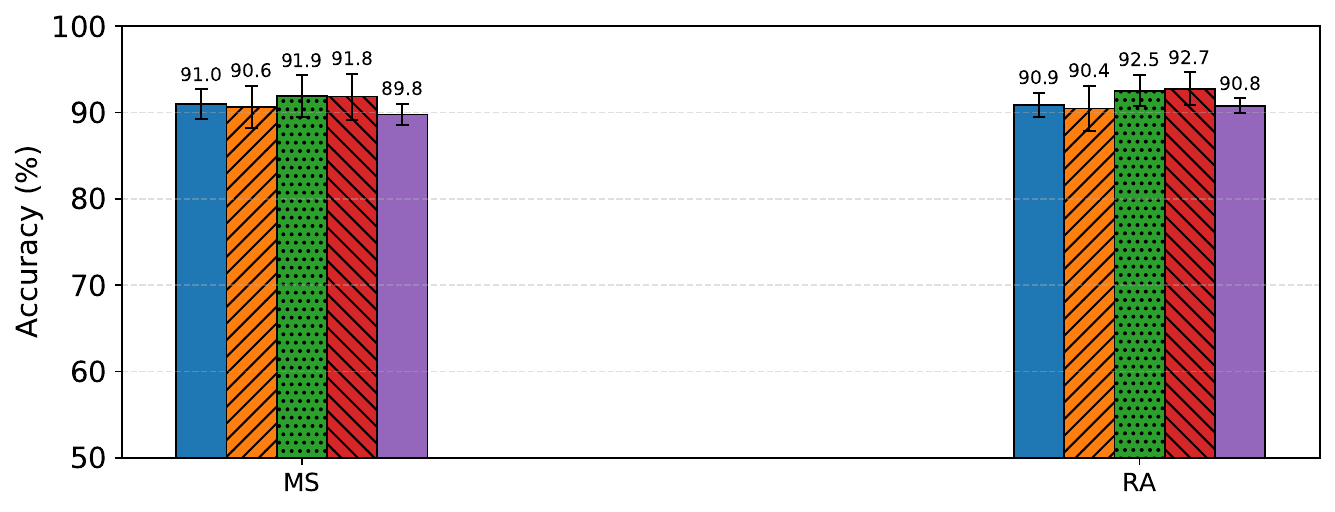}%
        \label{subfig:pillar3}%
    }

    \caption{Final accuracy. Error bars reflect the standard deviation over last 50 communication rounds.\label{fig:pillar}}
\end{figure}

\subsection{Accuracy Performance}

Under identical partial visibility settings, we benchmark our method against representative baselines.

As quantified in Table~\ref{tab:acc}, our proposed method achieves competitive or superior accuracy across all evaluated scenarios, while demonstrating relatively low standard deviations over multiple runs, confirming its enhanced robustness.
Our method achieves the highest test accuracy in 7 out of 10 evaluated scenarios, with an average improvement of \(3.8\%\) over the strongest baseline across these seven scenarios. Under the Label Skew setting, our method consistently ranks first, raising the final accuracy by \(5.3\%\) on average. In the remaining scenarios, our method achieves competitive results, falling within \(1.9\%\) of the best-performing baseline.

Fig.~\ref{fig:baseline_curve} demonstrates the accuracy evolution of different methods under various partial visibility and heterogeneity settings.
Under identical heterogeneity, the accuracy evolution trend of each method remains consistent across both visibility scenarios.
Our method shows a clear gain over the baselines in the Label Skew scenarios (Fig.~\ref{fig:baseline_curve}(c), (d), (g), (h)), with faster accuracy improvement and higher accuracy after convergence.
Under the Dirichlet distribution, although the accuracy difference among methods is relatively moderate, our method still achieves competitive or superior accuracy.
In addition, the performance gains of our method are more pronounced in the MS scenario than in the RA scenario.

Fig.~\ref{fig:pillar} demonstrates the final accuracy of each method, as well as the standard deviation over the last 50 communication rounds.
Under the more heterogeneous Label Skew distribution, our method achieves a notable improvement in final accuracy while maintaining a lower standard deviation compared to the baselines.
Under the Dirichlet distribution, where the heterogeneity is relatively moderate, our method still achieves competitive or superior final accuracy, with a lower standard deviation than the compared methods.
Fig.~\ref{fig:pillar}(c) illustrates the accuracy on the UCI-HAR dataset, where data heterogeneity across clients is not as severe as under the Dirichlet and Label Skew distributions. In this setting, all methods achieve final accuracy above \(90\%\), and our method attains comparable accuracy to the baselines while consistently exhibiting lower standard deviation.
Our method improves accuracy while enhancing training stability. The accuracy improvement is more pronounced in challenging scenarios where the baseline accuracy is relatively low, whereas the stability improvement, reflected by reduced standard deviation, is more evident when the accuracy is already high across all methods.

\subsection{Communication and Computation Efficiency}
\label{sec:efficiency}
\begin{figure*}[!t]
    \centering
    \includegraphics[width=0.7\textwidth]{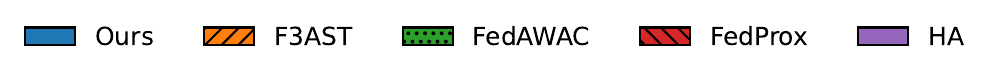}

    \vspace{-0.5em}

    \subfloat[CIFAR-10]{%
        \includegraphics[width=0.23\textwidth]{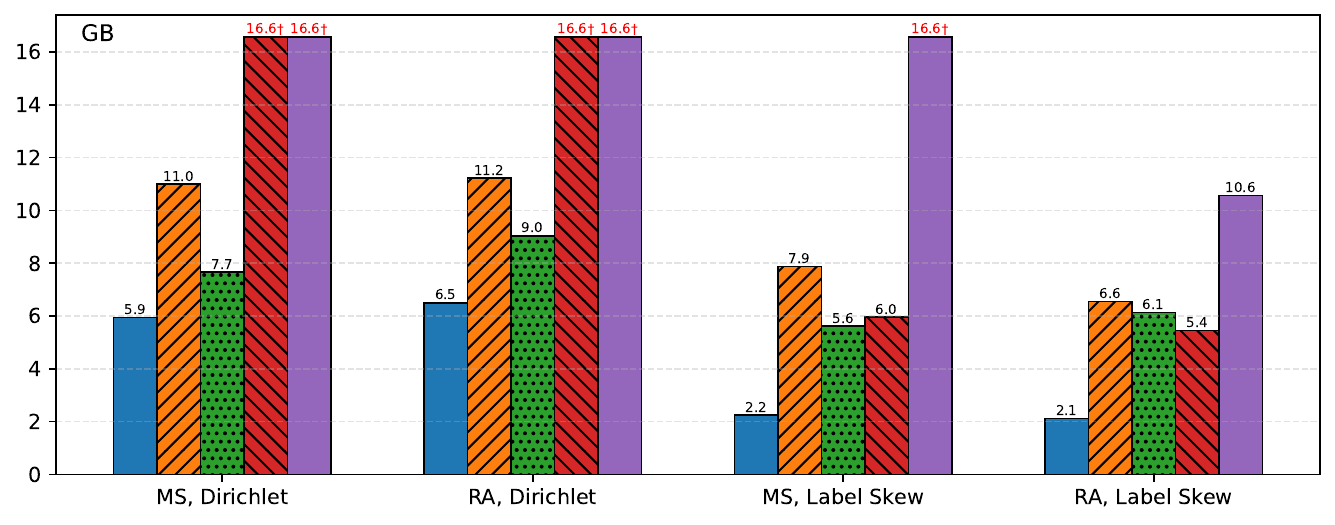}%
        \label{subfig:eff_uplink_cifar}%
    }
    \hfill
    \subfloat[Fashion-MNIST]{%
        \includegraphics[width=0.23\textwidth]{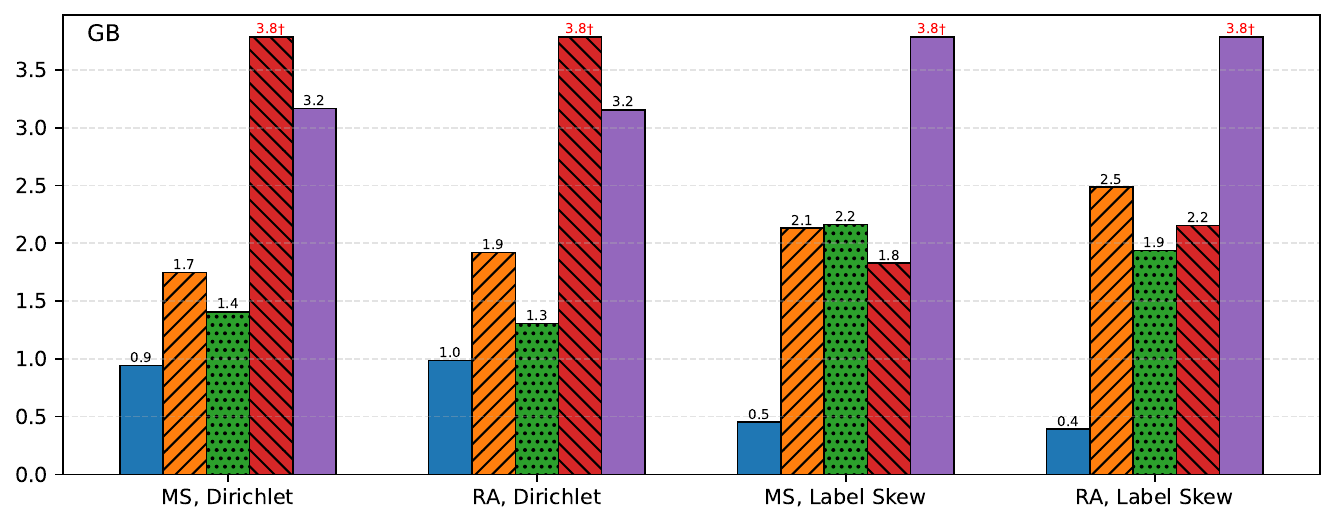}%
        \label{subfig:eff_uplink_fashion}%
    }
    \hfill
    \subfloat[CIFAR-10]{%
        \includegraphics[width=0.23\textwidth]{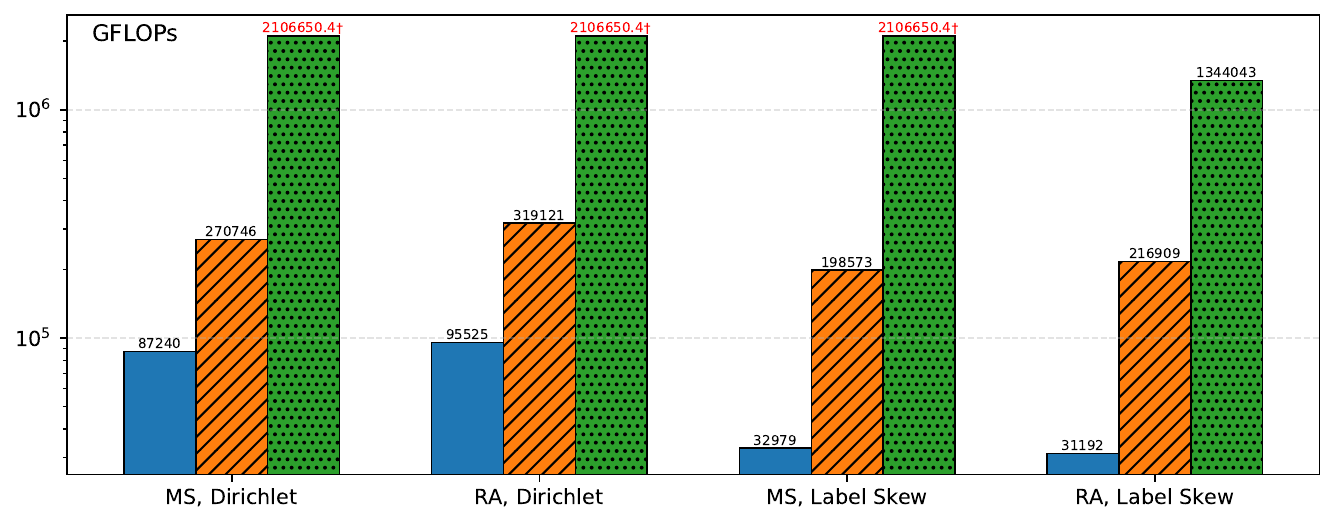}%
        \label{subfig:eff_server_cifar}%
    }
    \hfill
    \subfloat[Fashion-MNIST]{%
        \includegraphics[width=0.23\textwidth]{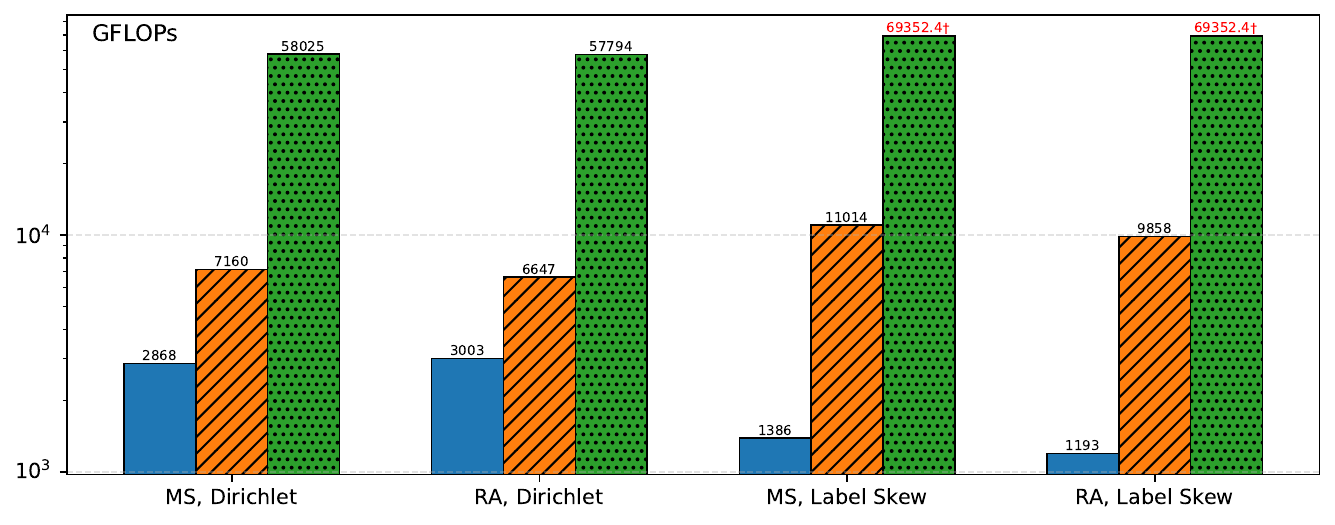}%
        \label{subfig:eff_server_fashion}%
    }

    \caption{Efficiency analysis. (a-b) Cumulative uplink communication cost (GB). (c-d) Cumulative server-side computation cost (GFLOPs, log scale). Lower is better.\label{fig:efficiency}}
\end{figure*}

Beyond accuracy, we evaluate the system-level efficiency of the proposed method using hardware-independent metrics, which decouple the analysis from specific device configurations and are consistent with our assumption of identical client hardware.
As communication is widely recognized as the primary bottleneck in FL systems, we measure the cumulative uplink data volume transmitted per client, which reflects the communication cost for each method.
In addition, we report the server-side floating-point operations (FLOPs) to quantify the additional computational overhead introduced by the DQL-based client selection policy.
Both metrics are measured at the communication round where a method first reaches a target accuracy \(Acc_t\). Fig.~\ref{fig:efficiency} reports these two metrics for all methods across the evaluated settings.

As shown in Fig.~\ref{subfig:eff_uplink_cifar} and~\ref{subfig:eff_uplink_fashion}, our method reduces uplink communication cost compared to baselines when
reaching the same target accuracy \(Acc_t\). Under the Dirichlet and Label Skew settings, the average savings are $27\%$
and $69\%$, respectively. This improvement stems from the faster convergence of our method, consistent with the trends
observed in Fig.~\ref{fig:baseline_curve}.
Fig.~\ref{subfig:eff_server_cifar} and~\ref{subfig:eff_server_fashion} present the server-side computation cost of our method compared to FedAWAC and HA-EdgeFLow, while F3AST and FedProx involve no server-side model forward computation.
Despite the additional computation and training overhead on the server, our method reduces server-side computation by $79\%$ and $92\%$ on average under the Dirichlet and Label Skew settings, respectively.

The above experiments demonstrate that although our proposed method introduces additional computation and communication overhead for Q-network training, the improved convergence enables savings in communication bandwidth, and the extra server computation cost remains reasonable.

\subsection{Impact of Context Length \(H\)}
\begin{figure}[!t]
    \centering
    \includegraphics[width=0.85\columnwidth]{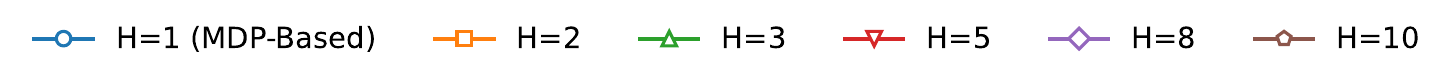}
    \vspace{-0.5em} 
    \subfloat[Dirichlet]{%
        \includegraphics[width=0.45\columnwidth]{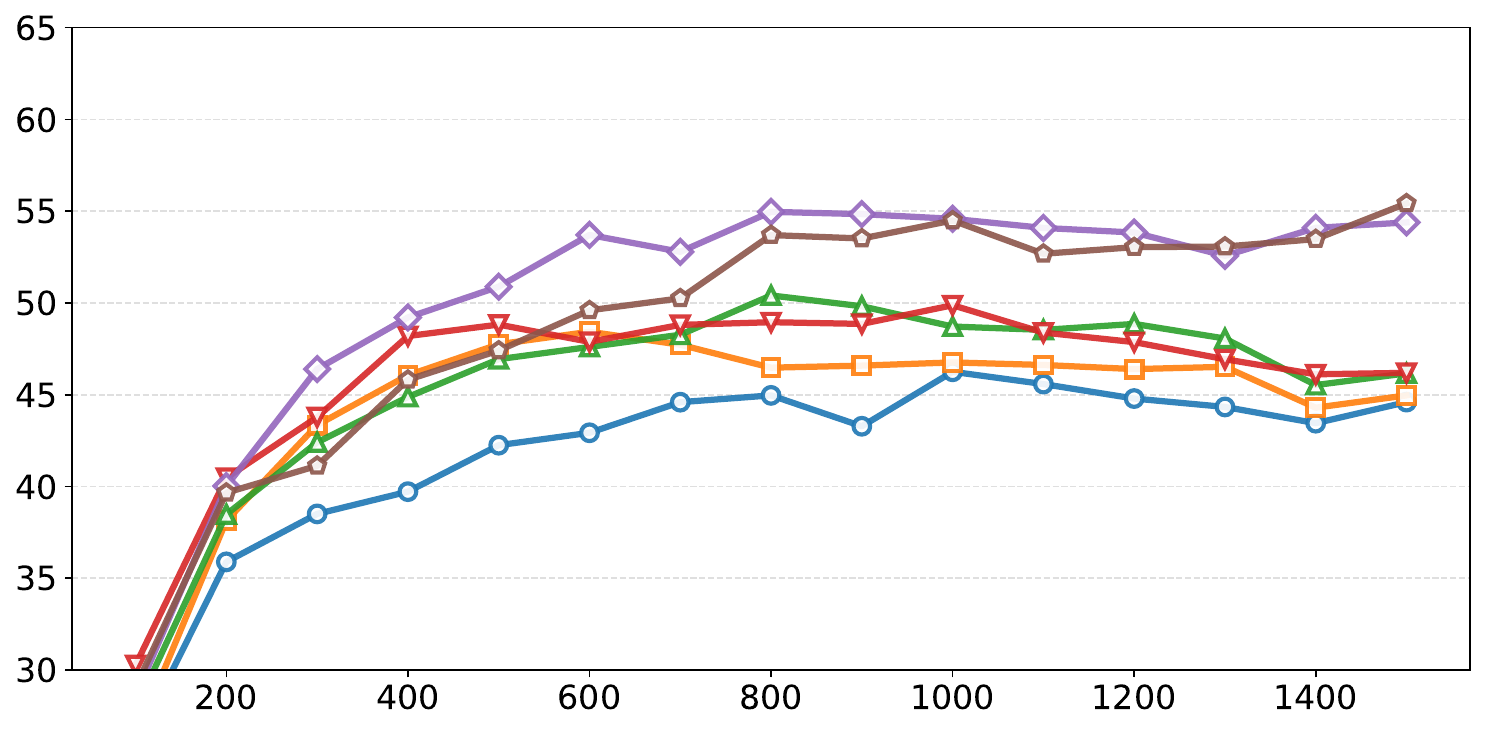}%
        \label{subfig:recalldiri}%
    }
    \hfill
    \subfloat[Label Skew]{%
        \includegraphics[width=0.45\columnwidth]{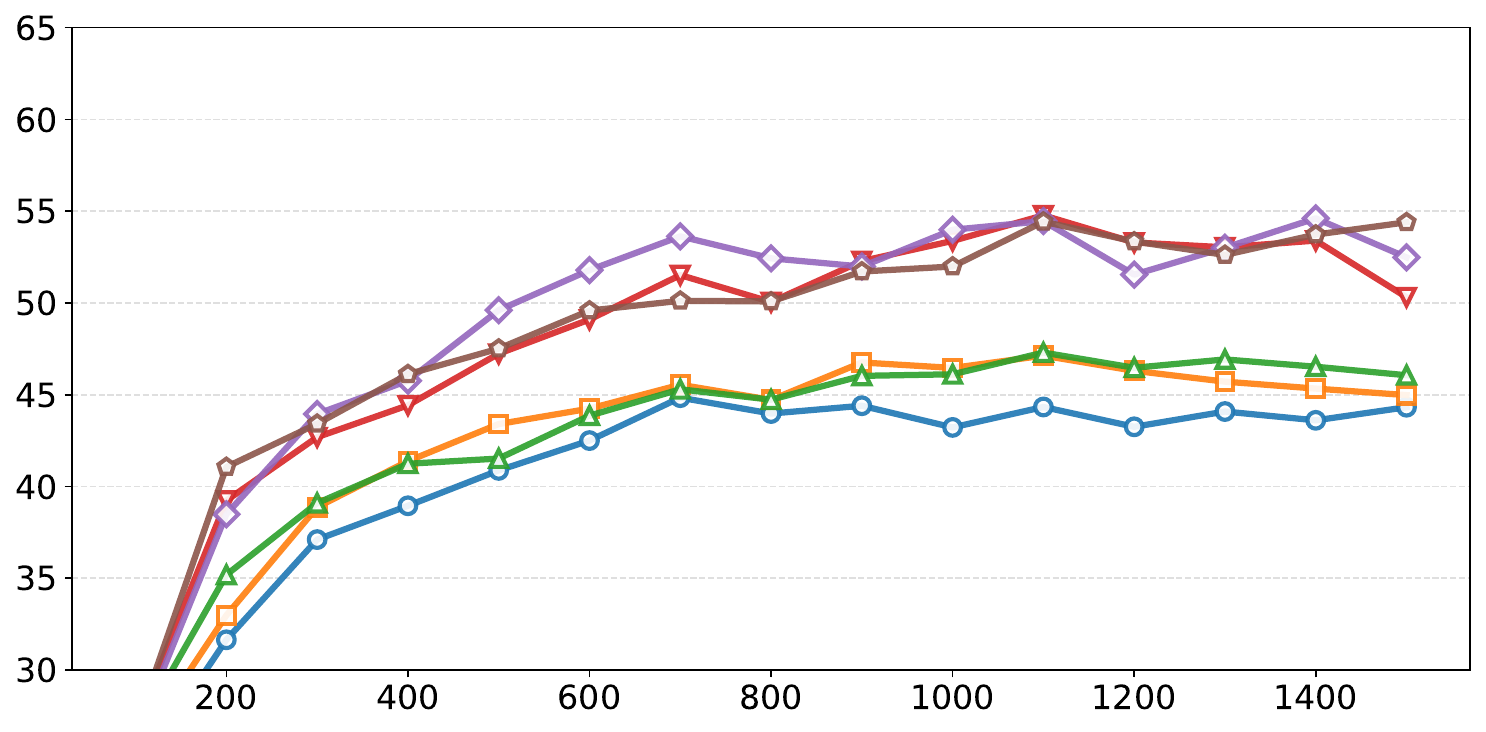}%
        \label{subfig:recallniid}%
    }

    \caption{Accuracy versus communication rounds for different \(H\).\label{fig:recall}}
\end{figure}
In this subsection, we investigate the impact of the length \(H\) of temporal context, defined in Eq.~\eqref{eq:truncated_history}. Since we formulate the client selection problem as a POMDP and introduce temporal context as a solution, the impact of \(H\) validates the necessity of our formulation.
Notably, the POMDP-based solution degenerates to an MDP-based solution when \(H=1\), since the decision no longer relies on the context.
Fig.~\ref{fig:recall} demonstrates the accuracy versus communication rounds of various \(H\) on the CIFAR-10 dataset under the Mobile Server (MS) setting and Label Skew distribution. Similar trends are observed under other settings and thus omitted for brevity.

As illustrated in Fig.~\ref{fig:recall}, the MDP-based case (\(H=1\)) yields the lowest performance, confirming that the MDP-based solution is less suitable for the partial visibility scenario compared to POMDP. The performance improves as \(H\) increases, indicating that incorporating additional historical information enhances the client selection process.
However, further enlarging the context length to \(H > 8\) brings no noticeable performance improvement. This suggests that recent states and actions of the FL server already contain sufficient historical information for effective client selection.

This experiment validates our approach of introducing historical information to address the POMDP. Moreover, we confirm that a limited-length temporal window suffices in practice.

\subsection{Impact of $|\mathcal{C}^t|$, $p$, and $|\mathcal{S}^t|$}
To validate our proposed method under various visibility situations, we investigate the impact of the number of visible clients per communication round (controlled by $|\mathcal{C}^t|$ in MS and $p$ in RA) and the selection size $|\mathcal{S}^t|$.

\begin{figure}[!t]
    \centering

    \subfloat[Dirichlet]{%
        \includegraphics[width=0.45\columnwidth]{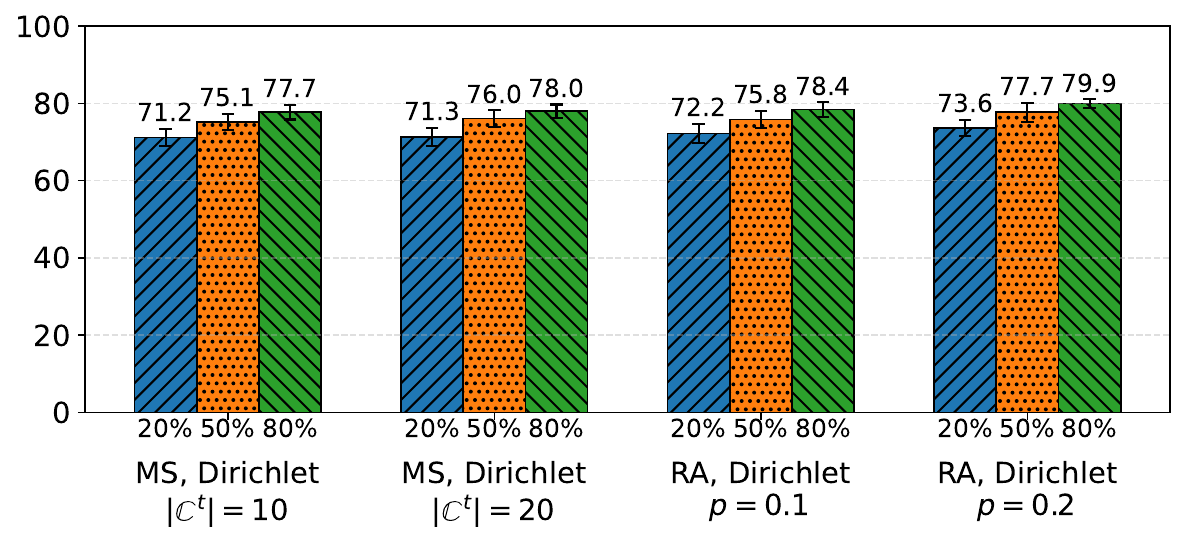}%
        \label{subfig:selbardiri}%
    }
    \hfill 
    \subfloat[Label Skew]{%
        \includegraphics[width=0.45\columnwidth]{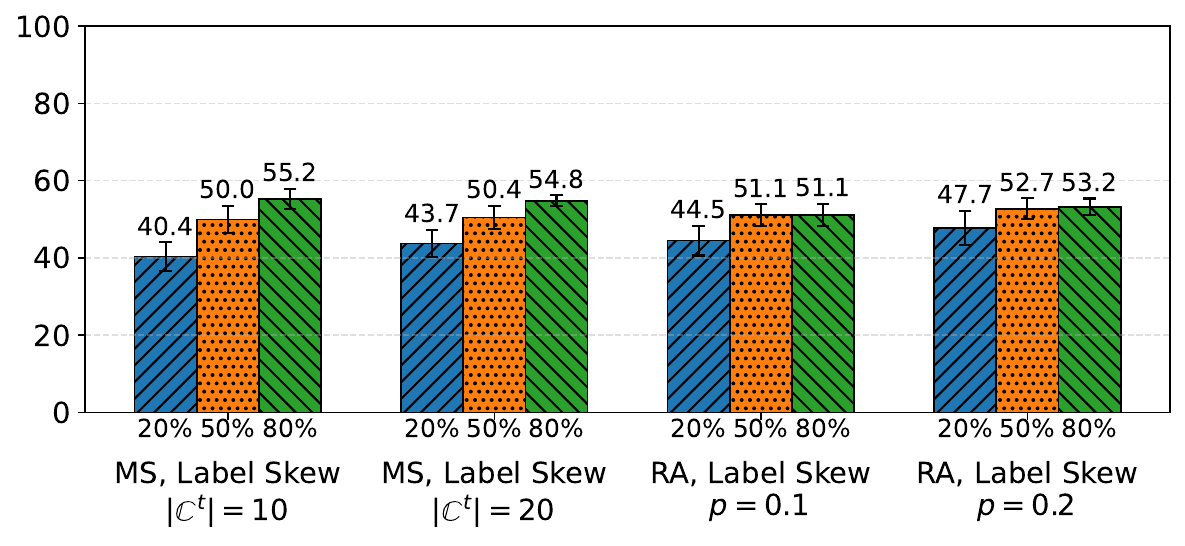}%
        \label{subfig:selbarniid}%
    }

    \caption{Accuracy under varying client selection ratios $K/|\mathcal{C}^t|$. Results are averaged over the final 10 training rounds, where the error bars denote the standard deviation.\label{fig:selbar}}
\end{figure}

As illustrated in Fig.~\ref{fig:selbar}, the performance trends remain consistent across all combinations of visibility and data heterogeneity settings.
First, enlarging either the average number of clients (\ie increasing $|\mathcal{C}^t|$ or $p$) or the selection ratio $|\mathcal{S}^t|/|\mathcal{C}^t|$ improves the test accuracy.
The extent of these performance gains exhibits a uniform pattern across diverse scenarios, and our method performs consistently across different visibility conditions and selection sizes.

\subsection{Ablation Study: Identity-Aware Embedding}
\begin{figure}[!t]
    \centering
    \begin{minipage}{0.64\columnwidth}
        \centering
        \includegraphics[width=\columnwidth]{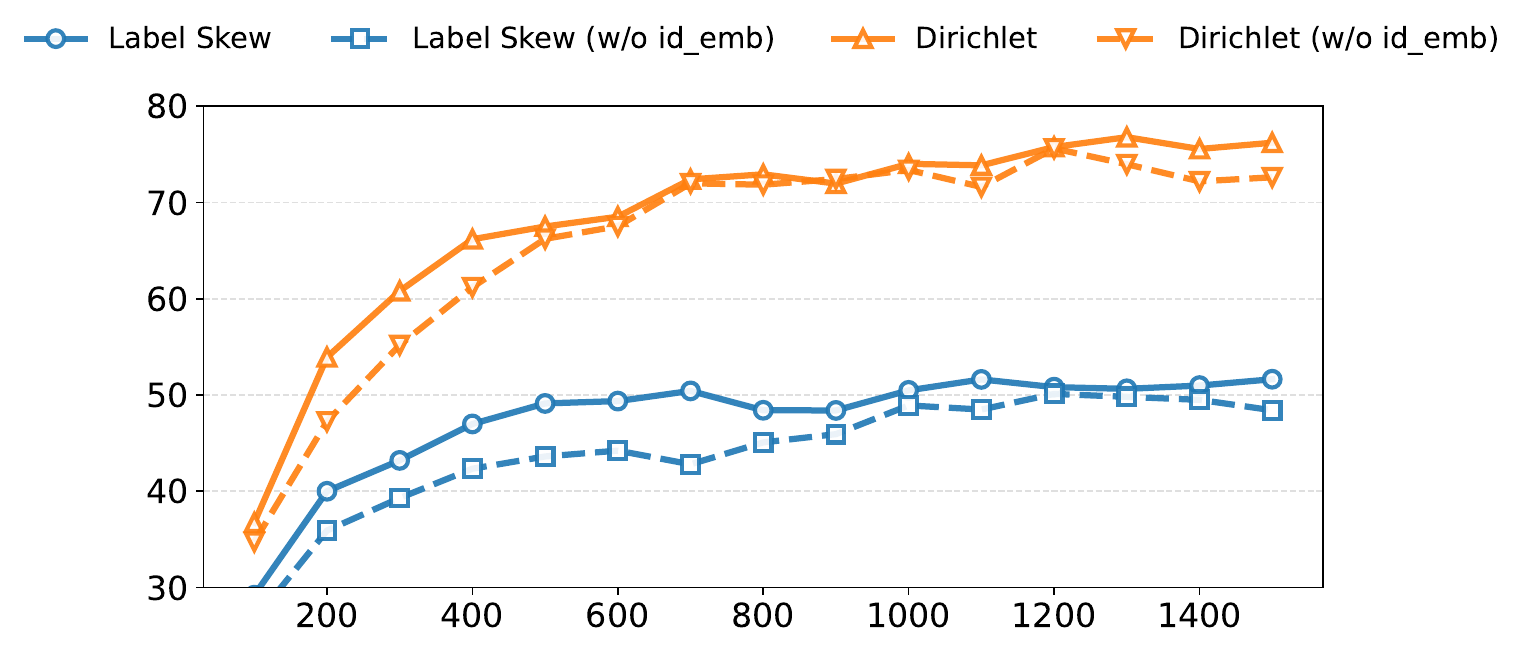}
    \end{minipage}
    \hfill
    \begin{minipage}{0.32\columnwidth}
        \centering
        \includegraphics[width=\columnwidth]{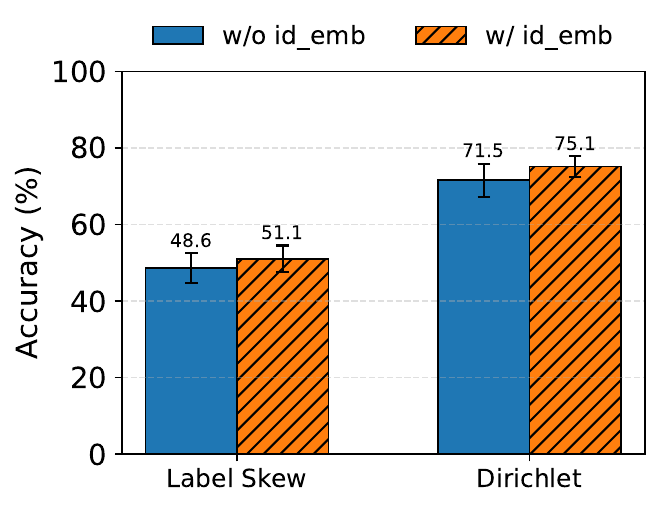}
    \end{minipage}
    \caption{Ablation on CIFAR-10 under Mobile Server setting. Accuracy versus Communication rounds (left) and final accuracy comparison (right). \textit{w/o emb} denotes that the Identity-Aware Embedding module is removed.\label{fig:ablation}}
\end{figure}
In this subsection, we conduct an ablation experiment to validate the effectiveness of the Identity-Aware Embedding module in the Q-network.
As shown in Fig.~\ref{fig:ablation}, removing the Identity-Aware Embedding module leads to a consistent reduction in accuracy throughout the entire training process across all evaluated settings.
Under the Dirichlet distribution, this degradation is further accompanied by a substantial increase in the standard deviation of the final accuracy, indicating reduced training stability.
These results confirm the necessity of the Identity-Aware Embedding module, which contributes to both improved accuracy and enhanced training stability by enabling the Q-network to retain persistent, client-specific characteristics across communication rounds.

\section{Conclusion}
This work established a POMDP framework for federated client selection under partial visibility. We formulated the POMDP elements and developed a multi-step DQL solution, integrating a Spatio-Temporal Attention-based Q-Network to encode historical global models and dynamically handle variable-length inputs.
Experiments across visibility and heterogeneity settings show that our method improves accuracy and training stability in most scenarios and reduces computational and communication cost.
Ablation studies on context length and Identity-Aware embeddings further validated the Spatio-Temporal Attention architecture's design choices.
By adapting training to the time-varying set of clients visible to the server, the framework is well suited to IoT and edge deployments where device availability and server coverage fluctuate over time.

\textbf{Limitations and Future Work.} While effective across the studied settings, testing the framework on larger and more diverse deployments, as well as on broader task domains, is the next step. The convergence guarantee also rests on standard simplifying assumptions, leaving room for stronger theoretical results. We additionally expect that more expressive modeling of client interactions and temporal context could yield further gains, which we leave to future work.

\bibliographystyle{IEEEtran}
\bibliography{refs}

\appendix

\subsection{Proof of Theorem~\ref{th:conv_sel_flow}} \label{app:conv_sel_flow}
To ensure the clarity of the proof, we first establish the necessary lemmas in Appendix~\ref{appa:lemma_flow}, before proceeding to the proof of the main result in Appendix~\ref{appa:thproof_flow}. The proofs of the lemmas are deferred to Appendix~\ref{appa:lemmaproof_flow}.

\subsection{Key Lemmas} \label{appa:lemma_flow}

\begin{lemma} \label{lem:1_flow}
    Under Assumption~\ref{ass:lsmooth_flow}(i), it follows that
    \begin{equation} \label{eq:lemma:1_flow}
        \begin{split}
            \E{\inner{\nabla F(W_{glob}^t)}{\nabla F_{\mathcal{C}^t}(W_{glob}^t) - \tilde{g}_{n,j}^t}} \qquad \qquad \qquad \\
            \leq \frac{1}{4}\E{\normsq{\nabla F(W_{glob}^t)}} + \frac{L^2}{|\mathcal{C}^t|}\sum_{n \in \mathcal{C}^t} \normsq{W_{glob}^t - W_{n,j}^t}
        \end{split}
    \end{equation}
\end{lemma}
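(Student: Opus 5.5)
The plan is to read $\tilde{g}_{n,j}^t$ as the stochastic gradient that client $n\in\mathcal{C}^t$ computes at its $j$-th local iterate $W_{n,j}^t$, and to read the left-hand side of \eqref{eq:lemma:1_flow} as averaged over $n\in\mathcal{C}^t$. That is, the vector in the inner product is $\nabla F_{\mathcal{C}^t}(W_{glob}^t) - \frac{1}{|\mathcal{C}^t|}\sum_{n\in\mathcal{C}^t}\tilde{g}_{n,j}^t$. With this reading the right-hand side, which averages over the cluster, matches. The argument has three moves: remove the sampling noise by conditioning, split the inner product with Young's inequality, and bound the remaining gradient difference by smoothness.

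First, condition on the history up to local step $j$, which includes $W_{glob}^t$, $\mathcal{C}^t$ and the iterates $W_{n,j}^t$. Under this conditioning, unbiasedness gives $\mathbb{E}[\tilde{g}_{n,j}^t]=\nabla F_n(W_{n,j}^t)$. Since $\nabla F(W_{glob}^t)$ is measurable with respect to this history, the tower property replaces $\tilde{g}_{n,j}^t$ by $\nabla F_n(W_{n,j}^t)$ inside the inner product. Using $\nabla F_{\mathcal{C}^t}=\frac{1}{|\mathcal{C}^t|}\sum_{n\in\mathcal{C}^t}\nabla F_n$, the second argument becomes
\[
\frac{1}{|\mathcal{C}^t|}\sum_{n\in\mathcal{C}^t}\left(\nabla F_n(W_{glob}^t)-\nabla F_n(W_{n,j}^t)\right).
\]

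Second, apply Young's inequality in the form $\inner{a}{b}\le \frac14\normsq{a}+\normsq{b}$ with $a=\nabla F(W_{glob}^t)$. This produces the $\frac14\E{\normsq{\nabla F(W_{glob}^t)}}$ term directly. The remaining term, $\normsq{b}$, is the squared norm of an average, so Jensen's inequality bounds it by $\frac{1}{|\mathcal{C}^t|}\sum_{n}\normsq{\nabla F_n(W_{glob}^t)-\nabla F_n(W_{n,j}^t)}$. Invoking $L$-smoothness then bounds each summand by $L^2\normsq{W_{glob}^t-W_{n,j}^t}$, which is exactly the claimed second term (understood inside the expectation).

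The main obstacle is the smoothness step. Assumption~\ref{ass:lsmooth_flow}(i) is stated only for the global $F$, but the argument needs $\nabla F_n$ to be $L$-Lipschitz for each client. I would state explicitly that (i) is used in its standard per-client form (each $F_n$ is $L$-smooth), as is customary in FL analyses. A secondary point is to make the conditioning rigorous: the only thing to check is that the stochastic gradient noise at step $j$ is independent of $W_{n,j}^t$ given the past, which follows from the sampling model behind Assumption~\ref{ass:gradient_flow}.
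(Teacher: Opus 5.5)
Your proof is correct and uses the same ingredients as the paper's proof. These are Young's inequality $\inner{a}{b}\le\frac14\normsq{a}+\normsq{b}$ with $a=\nabla F(W_{glob}^t)$, unbiasedness $\mathbb{E}_\xi[\tilde{g}_{n,j}^t]=\nabla F_n(W_{n,j}^t)$, Jensen over the cluster average, and Lipschitz gradients. The paper also implicitly reads $\tilde{g}_{n,j}^t$ as a cluster average, exactly as you do.

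The difference is the order of your first two moves, and it matters. The paper applies Young first, while $b=\nabla F_{\mathcal{C}^t}-\tilde{g}_{n,j}^t$ is still stochastic. It then writes $\E{\normsq{\nabla F_{\mathcal{C}^t}-\tilde{g}_{n,j}^t}}=\normsq{\frac{1}{|\mathcal{C}^t|}\sum_{n\in\mathcal{C}^t}(\nabla F_n(W_{glob}^t)-\mathbb{E}_\xi[\tilde{g}_{n,j}^t])}$ as an equality. That step is not valid: since $\E{\normsq{X}}=\normsq{\E{X}}+\E{\normsq{X-\E{X}}}$, it silently drops a nonnegative variance term. A Young-first argument would therefore carry an extra $\sigma^2$-type term.

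You instead take the conditional expectation while the stochastic gradient still enters linearly through the inner product. The noise then cancels exactly, and you apply Young only to vectors measurable with respect to the conditioning history. This ordering is what actually delivers the stated bound with no variance term.

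Your caveat about smoothness is also justified. The paper invokes Assumption~\ref{ass:lsmooth_flow}(i) for the Lipschitz step, but that assumption concerns only $F$. Moreover, as a one-sided quadratic upper bound, it does not by itself give $\normsq{\nabla F_n(W)-\nabla F_n(W')}\le L^2\normsq{W-W'}$. So the per-client Lipschitz-gradient hypothesis you make explicit is genuinely needed. The same holds for conditional unbiasedness, which the paper also uses without listing it among its assumptions.
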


\begin{lemma} \label{lem:2_flow}
    Under Assumption~\ref{ass:niid_flow}, for the selected subset $\mathcal{S}^t \subset \mathcal{C}^t$, it follows that
    \begin{equation} \label{eq:lemma:2_flow}
        \begin{split}
            \mathbb{E}&\Big[\inner{\nabla F(W_{glob}^t)}{\nabla F(W_{glob}^t) - \nabla F_{\mathcal{S}^t}(W_{glob}^t)}\Big] \\
            \leq{} &\frac{1}{2}\E{\normsq{\nabla F(W_{glob}^t)}} - \frac{1}{2}\E{\normsq{\nabla F_{\mathcal{S}^t}(W_{glob}^t)}} \\
            &+ d_{\mathcal{C}^t}^2 + \normsq{\nabla F_{\mathcal{C}^t}(W_{glob}^t) - \nabla F_{\mathcal{S}^t}(W_{glob}^t)}
        \end{split}
    \end{equation}
    where $\nabla F_{\mathcal{S}^t}(\cdot) \triangleq \frac{1}{|\mathcal{S}^t|}\sum_{n \in \mathcal{S}^t} \nabla F_n(\cdot)$.
\end{lemma}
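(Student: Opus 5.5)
The plan is to use the polarization identity together with an elementary splitting inequality; nothing beyond Assumption~\ref{ass:niid_flow}(i) should be needed. Throughout, abbreviate $a \triangleq \nabla F(W_{glob}^t)$, $b \triangleq \nabla F_{\mathcal{S}^t}(W_{glob}^t)$ and $c \triangleq \nabla F_{\mathcal{C}^t}(W_{glob}^t)$. These are defined pointwise for every realization of $W_{glob}^t$, $\mathcal{C}^t$ and $\mathcal{S}^t$.

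\textbf{Step 1 (polarization).} For any vectors $a, b$ we have $2\inner{a}{a-b} = \normsq{a} + \normsq{a-b} - \normsq{b}$. This identity follows by expanding $\normsq{a-b} = \normsq{a} - 2\inner{a}{b} + \normsq{b}$. It gives
\begin{equation*}
\inner{a}{a-b} = \tfrac{1}{2}\normsq{a} - \tfrac{1}{2}\normsq{b} + \tfrac{1}{2}\normsq{a-b}.
\end{equation*}
The first two terms are already the terms $\tfrac{1}{2}\E{\normsq{\nabla F(W_{glob}^t)}} - \tfrac{1}{2}\E{\normsq{\nabla F_{\mathcal{S}^t}(W_{glob}^t)}}$ in \eqref{eq:lemma:2_flow}, once expectations are taken.

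\textbf{Step 2 (split through the cluster gradient).} Insert the cluster-level gradient $c$ and write $a-b = (a-c) + (c-b)$. Then apply $\normsq{x+y} \le 2\normsq{x} + 2\normsq{y}$, which follows from Young's inequality. This yields
\begin{equation*}
\tfrac{1}{2}\normsq{a-b} \le \normsq{a-c} + \normsq{c-b}.
\end{equation*}
The factor $2$ from the splitting inequality exactly cancels the $\tfrac12$ from polarization. This is why the final bound carries $d_{\mathcal{C}^t}^2$ and $\normsq{\nabla F_{\mathcal{C}^t} - \nabla F_{\mathcal{S}^t}}$ with unit coefficients.

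\textbf{Step 3 (take expectations).} Take the expectation of the resulting pointwise inequality. Bound $\E{\normsq{a-c}} = \E{\normsq{\nabla F(W_{glob}^t) - \nabla F_{\mathcal{C}^t}(W_{glob}^t)}}$ by $d_{\mathcal{C}^t}^2$ using Assumption~\ref{ass:niid_flow}(i). Keep the selection-discrepancy term $\normsq{c-b}$ as it is. Under the fixed, stable Q-network the selection $\mathcal{S}^t$ is a deterministic function of the observation, so this term can be read conditionally on the round's cluster and selection. Equivalently, one may keep it inside the expectation, which is how it will be consumed later. In the main proof it will then be controlled via Assumptions~\ref{ass:lsmooth_flow}(ii) and~\ref{ass:qstrong}, which produce the $\Gamma$ and $\Delta_Q$ terms.

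\textbf{Main obstacle.} There is no real analytic obstacle; the argument is a two-line inequality chain. The only delicate point is bookkeeping: the last term in \eqref{eq:lemma:2_flow} is written without an expectation. I would handle this by stating the inequality conditionally on $(\mathcal{C}^t, \mathcal{S}^t)$, or by noting that the bound holds pointwise before averaging. I would also check that the constants $\tfrac12$ and $1$ line up exactly as in the statement.
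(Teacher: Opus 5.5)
Your proposal is correct and follows essentially the same route as the paper. The paper applies the polarization identity with $a=\nabla F$ and $b=\nabla F-\nabla F_{\mathcal{S}^t}$, which is equivalent to your form, then splits $\normsq{\nabla F-\nabla F_{\mathcal{S}^t}}\le 2\normsq{\nabla F-\nabla F_{\mathcal{C}^t}}+2\normsq{\nabla F_{\mathcal{C}^t}-\nabla F_{\mathcal{S}^t}}$ and invokes Assumption~\ref{ass:niid_flow}(i); your remark that the last term appears without an expectation points to the same loose bookkeeping in the paper's statement, and reading it conditionally on $(\mathcal{C}^t,\mathcal{S}^t)$ is a reasonable fix.
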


\begin{lemma} \label{lem:3_flow}
    Under Assumption~\ref{ass:gradient_flow}, it follows that
    \begin{equation} \label{eq:lemma:3_flow}
        \frac{1}{|\mathcal{C}^t|}\sum_{n \in \mathcal{C}^t} \normsq{W_{glob}^t - W_{n,j}^t} \leq j^2\eta^2 G^2
    \end{equation}
\end{lemma}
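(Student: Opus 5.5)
The statement to prove is Lemma~\ref{lem:3_flow}, the local-drift bound. The plan is a direct unrolling of the local SGD recursion, followed by Cauchy--Schwarz and Assumption~\ref{ass:gradient_flow}.

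By the local update rule in Algorithm~\ref{alg:flow}, client $n$ starts from $W_{n,0}^t = W_{glob}^t$ and performs $W_{n,i+1}^t = W_{n,i}^t - \eta\,\tilde{g}_{n,i}^t$. Telescoping gives
\begin{equation*}
W_{glob}^t - W_{n,j}^t = \eta\sum_{i=0}^{j-1}\tilde{g}_{n,i}^t .
\end{equation*}

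Next, apply the elementary inequality $\normsq{\sum_{i=0}^{j-1} a_i} \le j\sum_{i=0}^{j-1}\normsq{a_i}$, which is Cauchy--Schwarz or, equivalently, Jensen for the convex map $\normsq{\cdot}$. This yields
\begin{equation*}
\normsq{W_{glob}^t - W_{n,j}^t} \le \eta^2 j \sum_{i=0}^{j-1}\normsq{\tilde{g}_{n,i}^t}.
\end{equation*}
Taking expectations and using $\E{\normsq{\tilde{g}_{n,i}^t}}\le G^2$ from Assumption~\ref{ass:gradient_flow}, each of the $j$ summands contributes at most $G^2$, so the right-hand side is at most $j^2\eta^2G^2$ for every $n\in\mathcal{C}^t$. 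Averaging over $n\in\mathcal{C}^t$ preserves this bound and gives \eqref{eq:lemma:3_flow}. The case $j=0$ is trivial, since both sides vanish.

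The only subtle point is expectation. Assumption~\ref{ass:gradient_flow} bounds only the second moment, not the realized norm, so the inequality holds in expectation. I would therefore read \eqref{eq:lemma:3_flow} as stated with an implicit $\mathbb{E}$, consistent with how Lemma~\ref{lem:1_flow} uses it inside expectations. If a pathwise statement were intended, one would need an almost-sure bound $\normsq{\tilde g}\le G^2$, and I would note this. Apart from this interpretive step, the argument is routine.
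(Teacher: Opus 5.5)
Your proof is correct and follows the paper's argument: you telescope the local SGD steps to get $W_{glob}^t - W_{n,j}^t = \eta\sum_{i=0}^{j-1}\tilde{g}_{n,i}^t$, apply Jensen (equivalently Cauchy--Schwarz) to obtain the factor $j$, and bound each second moment by $G^2$ via Assumption~\ref{ass:gradient_flow}. Your remark that the bound holds only in expectation also matches the paper, whose own proof replaces the cluster average by an expectation in its first step without comment.
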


\begin{lemma} \label{lem:4_flow}
    Under Assumptions~\ref{ass:lsmooth_flow}--\ref{ass:qstrong}, the selection-induced gradient deviation from Q-value-based client selection at rate $c$ satisfies (viewing the Q-network as a function of the gradient, \ie $q_n = Q(W_n) \triangleq Q(\nabla F_n(W_{glob}^t))$, hereafter written $Q(\nabla F_n)$ directly):
    \begin{equation}
        \begin{split}
            \normsq{\nabla F_{\mathcal{C}^t}(W_{glob}^t) - \nabla F_{\mathcal{S}^t}(W_{glob}^t)}
            &\leq \frac{2L_Q}{\mu c}\Gamma_{\mathcal{C}^t}^2 \\
            &+ \frac{4(L_Q + \mu c)}{\mu^2 c}\Delta_Q^t
        \end{split}
        \label{eq:eps_bound}
    \end{equation}
    where $\Gamma_{\mathcal{C}^t}^2 \triangleq \mathbb{E}_{n \in \mathcal{C}^t}[\normsq{\nabla F_n - \nabla F_{\mathcal{C}^t}}]$ is the intra-cluster gradient variance and $\Delta_Q^t \triangleq Q^* - Q(\nabla F_{\mathcal{C}^t}) \geq 0$ is the Q-value alignment gap.
\end{lemma}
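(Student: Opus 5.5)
The plan is to view the Q-network as a scalar function $Q$ of a client's gradient, pass through its maximiser $v^*$ of Assumption~\ref{ass:qstrong}, and use the fact that the selected clients are the top-$c$ fraction by Q-value. Write $g_n \triangleq \nabla F_n(W_{glob}^t)$, $\bar g_{\mathcal{C}} \triangleq \nabla F_{\mathcal{C}^t}(W_{glob}^t)$ and $\bar g_{\mathcal{S}} \triangleq \nabla F_{\mathcal{S}^t}(W_{glob}^t)$. First split
\[
\normsq{\bar g_{\mathcal{C}} - \bar g_{\mathcal{S}}} \le 2\normsq{\bar g_{\mathcal{C}} - v^*} + 2\normsq{\bar g_{\mathcal{S}} - v^*}.
\]
For the first term, Assumption~\ref{ass:qstrong} gives directly $\normsq{\bar g_{\mathcal{C}} - v^*} \le \frac{2}{\mu}\Delta_Q^t$.

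For the second term, apply Jensen to get $\normsq{\bar g_{\mathcal{S}} - v^*} \le \frac{1}{|\mathcal{S}^t|}\sum_{n\in\mathcal{S}^t}\normsq{g_n - v^*}$. Then use quadratic growth per client, $\normsq{g_n - v^*} \le \frac{2}{\mu}(Q^* - Q(g_n))$. This yields
\[
\normsq{\bar g_{\mathcal{S}} - v^*} \le \frac{2}{\mu}\Big(Q^* - \frac{1}{|\mathcal{S}^t|}\sum_{n\in\mathcal{S}^t} Q(g_n)\Big).
\]
Since $\mathcal{S}^t$ consists of the top-$K$ Q-values in $\mathcal{C}^t$, the average over $\mathcal{S}^t$ is at least the average over $\mathcal{C}^t$. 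A sharper bound that exposes the factor $1/c$: the deficit $\sum_{n\in\mathcal{S}^t}(Q^*-Q(g_n))$ is at most $\sum_{n\in\mathcal{C}^t}(Q^*-Q(g_n))$ because each summand is nonnegative. Dividing by $|\mathcal{S}^t| = c|\mathcal{C}^t|$ gives
\[
Q^* - \frac{1}{|\mathcal{S}^t|}\sum_{n\in\mathcal{S}^t} Q(g_n) \le \frac{1}{c}\Big(Q^* - \frac{1}{|\mathcal{C}^t|}\sum_{n\in\mathcal{C}^t}Q(g_n)\Big).
\]
Either route is fine; I will use the $1/c$ version, since it matches the constants in \eqref{eq:eps_bound}.

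The remaining step converts the average of $Q(g_n)$ over the cluster into $Q(\bar g_{\mathcal{C}})$ plus a variance term, using the smoothness in Assumption~\ref{ass:lsmooth_flow}(ii). Expanding at $x=\bar g_{\mathcal{C}}$, $y=g_n$ gives
\[
Q(g_n) \ge Q(\bar g_{\mathcal{C}}) + \inner{\nabla Q(\bar g_{\mathcal{C}})}{g_n-\bar g_{\mathcal{C}}} - \frac{L_Q}{2}\normsq{g_n-\bar g_{\mathcal{C}}}.
\]
Averaging over $n\in\mathcal{C}^t$ kills the linear term, because $\bar g_{\mathcal{C}}$ is exactly the cluster mean. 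By Assumption~\ref{ass:niid_flow}(ii) this leaves
\[
Q^* - \frac{1}{|\mathcal{C}^t|}\sum_{n\in\mathcal{C}^t} Q(g_n) \le \Delta_Q^t + \frac{L_Q}{2}\Gamma_{\mathcal{C}^t}^2.
\]

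Combining these steps gives
\[
\normsq{\bar g_{\mathcal{C}}-\bar g_{\mathcal{S}}} \le \frac{4}{\mu}\Delta_Q^t + \frac{4}{\mu c}\Big(\Delta_Q^t+\frac{L_Q}{2}\Gamma_{\mathcal{C}^t}^2\Big) = \frac{2L_Q}{\mu c}\Gamma_{\mathcal{C}^t}^2 + \frac{4(1+c)}{\mu c}\Delta_Q^t.
\]
The stated coefficient is $\frac{4(L_Q+\mu c)}{\mu^2 c}$, which has a different scaling than this $\frac{4(1+c)}{\mu c}$. It arises if the cluster-mean term is handled differently, for instance by bounding $\normsq{\bar g_{\mathcal{C}}-v^*}$ through smoothness plus quadratic growth, which produces an extra $L_Q/\mu$. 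Matching these constants exactly is the main bookkeeping obstacle. I would first try to recover the paper's form by tracking where $L_Q/\mu$ enters when the $\Delta_Q$ contributions from the two split terms are combined. If that fails, I would note that the constant I derive can be converted to the stated one only under a relation between $L_Q$ and $\mu$ that the paper does not state: the comparison requires $\mu \le L_Q$ (dominating $1$ by $L_Q/\mu$ and $c$ by $cL_Q/\mu$), while $\mu \ge L_Q$ would be needed to dominate $c$ by $\mu c/\mu$. I would state this explicitly rather than claim the coefficients agree.

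The conceptual risk is elsewhere: the top-$K$ selection property and per-client quadratic growth are what make the argument work, and the smoothness inequality must be applied in the direction that gives a lower bound on $Q(g_n)$. That is precisely the form Assumption~\ref{ass:lsmooth_flow}(ii) provides.
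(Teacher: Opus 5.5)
Your derivation is correct and proves a slightly sharper inequality than the one stated. The shortfall is that you stop before concluding: you treat $\mu \le L_Q$ as an unstated extra hypothesis, and your comparison of constants is muddled. Written out,
\[
\frac{4(1+c)}{\mu c} = \frac{4}{\mu c} + \frac{4}{\mu}, \qquad \frac{4(L_Q+\mu c)}{\mu^2 c} = \frac{4L_Q}{\mu^2 c} + \frac{4}{\mu}.
\]
The $\frac{4}{\mu}$ parts coincide exactly, so no condition $\mu \ge L_Q$ ever enters, and the only comparison is $1$ versus $L_Q/\mu$. Moreover, $\mu \le L_Q$ is forced by the assumptions. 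Assumption~\ref{ass:lsmooth_flow}(ii) at $x = v^*$, where $\nabla Q(v^*)=0$ (the paper uses this too), gives $Q^* - Q(y) \le \frac{L_Q}{2}\normsq{y-v^*}$. Assumption~\ref{ass:qstrong} gives $Q^* - Q(y) \ge \frac{\mu}{2}\normsq{y-v^*}$. Any $y \neq v^*$ then yields $\mu \le L_Q$. Since $\Delta_Q^t \ge 0$, your bound implies \eqref{eq:eps_bound}, which closes the proof.

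As for the route, your first half (split through $v^*$, Jensen over $\mathcal{S}^t$, per-client quadratic growth, the $1/c$ deficit bound) reaches the same intermediate inequality as the paper:
\[
\normsq{\nabla F_{\mathcal{C}^t}-\nabla F_{\mathcal{S}^t}} \le \frac{4}{\mu c}(Q^*-\bar{Q}) + \frac{4}{\mu}\Delta_Q^t,
\]
where $\bar{Q}$ is the cluster average of $Q(\nabla F_n)$. The paper organizes this step as Jensen on $h_n=\nabla F_n-\nabla F_{\mathcal{C}^t}$ followed by a per-client split. The two arguments diverge only at the smoothness step. The paper expands Assumption~\ref{ass:lsmooth_flow}(ii) at $v^*$ and uses $\frac{1}{|\mathcal{C}^t|}\sum_n\normsq{\nabla F_n-v^*} = \Gamma_{\mathcal{C}^t}^2 + \normsq{v^*-\nabla F_{\mathcal{C}^t}}$. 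It then bounds the last term by quadratic growth a second time, getting $Q^*-\bar{Q} \le \frac{L_Q}{2}\Gamma_{\mathcal{C}^t}^2 + \frac{L_Q}{\mu}\Delta_Q^t$. That second use of Assumption~\ref{ass:qstrong} is where the $L_Q/\mu$ factor comes from, roughly as you suspected. Your expansion at the cluster mean kills the linear term on averaging and leaves $\Delta_Q^t$ with coefficient $1$, so you avoid that loss.

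One more correction: the $1/c$ step, in your argument and the paper's, uses only $Q(\nabla F_n)\le Q^*$. It therefore holds for any $\mathcal{S}^t\subseteq\mathcal{C}^t$ of size $c|\mathcal{C}^t|$, so the top-$K$ rule is not what makes the bound work, contrary to your closing remark.
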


\subsection{Proof of the Main Result} \label{appa:thproof_flow}
\begin{proof}
The overall model update after one training round with selection ratio $c = |\mathcal{S}^t| / |\mathcal{C}^t|$ is
\begin{equation}
    W_{glob}^{t+1} = W_{glob}^t - \frac{\eta}{|\mathcal{S}^t|}\sum_{n \in \mathcal{S}^t}\sum_{j=0}^{k-1}\tilde{g}_{n,j}^t
\end{equation}
By Assumption~\ref{ass:lsmooth_flow}(i), we have
\begin{align}
    & \mathbb{E}\big[F(W_{glob}^{t+1}) - F(W_{glob}^{t})\big] \nonumber \\
    & \leq \E{\inner{\nabla F(W_{glob}^t)}{W_{glob}^{t+1} - W_{glob}^{t}}} \nonumber \\
    & + \frac{L}{2}\E{\normsq{W_{glob}^{t+1} - W_{glob}^{t}}} \\
    \begin{split}
        & = -\frac{\eta}{c|\mathcal{C}^t|} \sum_{n \in \mathcal{S}^t} \sum_{j=0}^{k-1} \E{\inner{\nabla F}{\tilde{g}_{n,j}^t}} \\
        & + \frac{L\eta^2}{2(c|\mathcal{C}^t|)^2} \E{\normsq{\sum_{n \in \mathcal{S}^t} \sum_{j=0}^{k-1} \tilde{g}_{n,j}^t}}
    \end{split}
    \label{eq:lsmooth_sel}
\end{align}

First, we bound the first term in Eq.~\eqref{eq:lsmooth_sel}. Due to the fact that
\begin{equation}
    \begin{split}
        \inner{\nabla F}{\tilde{g}_{n,j}^t} &= \inner{\nabla F}{\tilde{g}_{n,j}^t - \nabla F_{\mathcal{S}^t}} \\
        &+ \inner{\nabla F}{\nabla F_{\mathcal{S}^t} - \nabla F} + \inner{\nabla F}{\nabla F}
    \end{split}
\end{equation}
and based on Lemma~\ref{lem:1_flow} and Lemma~\ref{lem:2_flow}, substituting Eq.~\eqref{eq:lemma:1_flow} and Eq.~\eqref{eq:lemma:2_flow} yields
\begin{align}
    -&\frac{\eta}{c|\mathcal{C}^t|} \sum_{n \in \mathcal{S}^t} \sum_{j} \E{\inner{\nabla F}{\tilde{g}_{n,j}^t}} \nonumber \\
    \begin{split}
        \leq{} &-\frac{k\eta}{4}\E{\normsq{\nabla F}} + k\eta\,d_{\mathcal{C}^t}^2 + k\eta\,\normsq{\nabla F_{\mathcal{C}^t} - \nabla F_{\mathcal{S}^t}} \\
        &+ \frac{L^2 \eta}{(c|\mathcal{C}^t|)^2} \sum_{n \in \mathcal{S}^t}\sum_{j} \normsq{W_{glob}^t - W_{n,j}^t} \\
        &- \frac{\eta}{2} \sum_{j} \E{\normsq{\nabla F_{\mathcal{S}^t}}}
    \end{split}
    \label{eq:t2_term1_flow}
\end{align}

Next, we bound the second term in Eq.~\eqref{eq:lsmooth_sel}. We have
\begin{align}
    & \frac{L\eta^2}{2(c|\mathcal{C}^t|)^2}\E{\normsq{\sum_{n \in \mathcal{S}^t}\sum_{j}\tilde{g}_{n,j}^t}} \nonumber \\
    \begin{split}
        \overset{\text{\ding{192}}}{=}{} &\frac{L\eta^2}{2(c|\mathcal{C}^t|)^2}\E{\normsq{\sum_{n}\sum_{j}\left(\tilde{g}_{n,j}^t - \E{\tilde{g}_{n,j}^t}\right)}} \\
        &+ \frac{L\eta^2}{2(c|\mathcal{C}^t|)^2}\normsq{\sum_{n}\sum_{j}\E{\tilde{g}_{n,j}^t}}
    \end{split}
    \\
    \begin{split}
        \overset{\text{\ding{193}}}{\leq}{} &\frac{Lk\eta^2\sigma^2}{2c|\mathcal{C}^t|} \\
        &+ \frac{Lk\eta^2}{2}\sum_{j}\E{\normsq{\nabla F_{\mathcal{S}^t}}}
    \end{split}
    \label{eq:t2_term2_flow}
\end{align}
where \ding{192} holds due to the fact that $\mathbb{E}\normsq{X} = \E{\normsq{X - \mathbb{E}X}} + \normsq{\mathbb{E}X}$, and \ding{193} follows from Fact~\ref{fact:1_flow}, Assumption~\ref{ass:gradient_flow}, and Jensen's inequality.

With upper bounds established for both terms in Eq.~\eqref{eq:lsmooth_sel}, Eq.~\eqref{eq:t2_term1_flow} and Eq.~\eqref{eq:t2_term2_flow} can be substituted back into Eq.~\eqref{eq:lsmooth_sel}, resulting in
\begin{align}
    \mathbb{E}\big[ & F(W_{glob}^{t+1}) - F(W_{glob}^{t})\big] \leq -\frac{k\eta}{4}\E{\normsq{\nabla F}} \nonumber \\
    &+ k\eta\,d_{\mathcal{C}^t}^2 + k\eta\,\normsq{\nabla F_{\mathcal{C}^t} - \nabla F_{\mathcal{S}^t}} \nonumber \\
    &+ \frac{Lk\eta^2\sigma^2}{2c|\mathcal{C}^t|} + \frac{\eta}{2}\left(Lk\eta - 1\right)\sum_{j}\E{\normsq{\nabla F_{\mathcal{S}^t}}} \label{eq:th_combined_flow}
\end{align}
Since $Lk\eta < 1$, the last term in Eq.~\eqref{eq:th_combined_flow} is negative and can be omitted. By Lemma~\ref{lem:3_flow}, substituting Eq.~\eqref{eq:lemma:3_flow} and using $\sum_{j=0}^{k-1}j^2 \leq \frac{k^3}{3}$, we obtain
\begin{equation}
    \begin{split}
        \mathbb{E}\big[F(W_{glob}^{t+1})& - F(W_{glob}^{t})\big] \leq -\frac{k\eta}{4}\E{\normsq{\nabla F}} \\
        &+ k\eta\,d_{\mathcal{C}^t}^2 + k\eta\,\normsq{\nabla F_{\mathcal{C}^t} - \nabla F_{\mathcal{S}^t}} \\
        &+ \frac{Lk\eta^2\sigma^2}{2c|\mathcal{C}^t|} + \frac{L^2k^3\eta^3G^2}{3}
    \end{split}
    \label{eq:th_perround_flow}
\end{equation}

Consider the time average $\frac{1}{T}\sum_{t=0}^{T-1}[\cdot]$ on both sides of Eq.~\eqref{eq:th_perround_flow}. Denoting the optimum of $F(\cdot)$ as $F^*$ and dividing by $\frac{k\eta}{4}$, we have
\begin{equation}
    \begin{split}
        \frac{1}{T}\sum_{t=0}^{T-1}\E{\normsq{\nabla F(W_{glob}^t)}} &\leq{} \frac{4}{k\eta T}\left(F(W_{glob}^0) - F^*\right) \\
        &+ 4\bar{d}^2 + \frac{4}{T}\sum_{t=0}^{T-1}\normsq{\nabla F_{\mathcal{C}^t} - \nabla F_{\mathcal{S}^t}} \\
        &+ \frac{2L\eta\sigma^2}{c\bar{|\mathcal{C}|}} + \frac{4L^2k^2\eta^2G^2}{3}
    \end{split}
    \label{eq:th_before_subst_flow}
\end{equation}
Substituting Eq.~\eqref{eq:eps_bound} into the third term of Eq.~\eqref{eq:th_before_subst_flow} yields Eq.~\eqref{eq:conv_sel_flow}, which completes the proof.
\end{proof}

\subsection{Proof of the Lemmas} \label{appa:lemmaproof_flow}
\label{app:lemmas_flow}
We will use the following facts in proving the results.

\begin{fact} \label{fact:1_flow}
    Let $\{\xi_i\}_{i=1}^{n}$ be a sequence of random variables, and the vector sequence $\{x_i\}_{i=1}^{n}$ satisfy that each $x_i \in \mathbb{R}^d$ is a function of $\{\xi_i\}_{i=1}^{n}$. Suppose that the conditional expectation of $x_i$ is $\E{x_i\mid\xi_{i-1}, \cdots, \xi_{1}} = e_i$, then we have
    \begin{equation}
        \E{\normsq{\sum_{i=1}^{n}(x_i - e_i)}} = \sum_{i=1}^{n} \E{\normsq{x_i - e_i}}
    \end{equation}
\end{fact}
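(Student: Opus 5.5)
The statement immediately above is Fact~\ref{fact:1_flow}: for a sequence $x_1,\dots,x_n$ adapted to $\xi_1,\dots,\xi_n$ with $\E{x_i\mid\xi_{i-1},\dots,\xi_1}=e_i$, the squared norm of the sum of the centered terms has expectation equal to the sum of the individual squared norms. I plan to expand the squared norm of the sum and show that every cross term has zero expectation. This is the martingale-difference orthogonality argument.

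\textbf{Setup.} Write $z_i \triangleq x_i - e_i$, so that
\begin{equation}
    \normsq{\sum_{i=1}^{n} z_i} = \sum_{i=1}^{n}\normsq{z_i} + 2\sum_{1\le i<l\le n}\inner{z_i}{z_l}.
\end{equation}
By linearity of expectation, it suffices to show $\E{\inner{z_i}{z_l}}=0$ for every pair $i<l$.

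\textbf{Measurability assumption.} I read the hypothesis in the standard adapted sense. Each $x_i$ is a function of $\xi_1,\dots,\xi_i$. Each $e_i$ is a function of $\xi_1,\dots,\xi_{i-1}$, being a conditional expectation given those variables. Hence $z_i$ is measurable with respect to $\sigma(\xi_1,\dots,\xi_i)$. The statement as written says $x_i$ is a function of the whole sequence, and in that literal form the claim can fail. So the first step is to make this adaptedness explicit; it is how the fact is used for the sequential local SGD steps $\tilde{g}_{n,j}^t$.

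\textbf{Vanishing cross terms.} For $i<l$, the index $i$ satisfies $i\le l-1$, so $z_i$ is $\sigma(\xi_1,\dots,\xi_{l-1})$-measurable. By the tower property,
\begin{equation}
    \E{\inner{z_i}{z_l}} = \E{\inner{z_i}{\E{z_l\mid \xi_{l-1},\dots,\xi_1}}}.
\end{equation}
The inner conditional expectation equals $e_l - e_l = 0$, because $e_l$ is itself measurable with respect to the conditioning variables. Therefore every cross term vanishes and the identity follows.

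\textbf{Main obstacle and integrability.} The tower-property step needs $z_i$ and $z_l$ to be square-integrable, so that the inner products are integrable and "pulling out what is known" is justified. In the intended application this is guaranteed by the bounded second moments in Assumption~\ref{ass:gradient_flow}, and I would state it as a standing assumption. The one real subtlety is the measurability of $z_i$ with respect to the earlier $\xi$'s. Once that is fixed, the rest is a routine one-line computation.
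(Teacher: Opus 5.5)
Your proof is correct and follows the same route as the paper: expand $\normsq{\sum_i (x_i - e_i)}$ into diagonal and cross terms, then make each cross term vanish by conditioning on $\xi_1,\dots,\xi_{l-1}$ via the tower property. Your explicit adaptedness hypothesis ($x_i$ depends only on $\xi_1,\dots,\xi_i$) is a valid clarification, not a change of approach: the paper's step of pulling $x_i - e_i$ outside the conditional expectation silently relies on it, and without it the identity can indeed fail.
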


\begin{proof}
    According to the fact that for any vector $a_i$,
    \begin{equation}
        \normsq{\sum_{i=1}^{n} a_i} = \sum_{i=1}^{n}\normsq{a_i} + 2\sum_{1 \leq i < j \leq n}\inner{a_i}{a_j}
    \end{equation}
    Assuming that $a_i = x_i - e_i$, we obtain
    \begin{equation}
        \begin{split}
            \E{\normsq{\sum_{i=1}^{n}(x_i - e_i)}} = \sum_{i=1}^{n} \E{\normsq{x_i - e_i}} \qquad \\
            + 2\sum_{1 \leq i < j \leq n} \E{(x_i - e_i)^\top (x_j - e_j)}
        \end{split}
    \end{equation}
    The law of total expectation implies that
    \begin{align}
        &\E{(x_i - e_i)^\top (x_j - e_j)} \nonumber \\
        & = \E{(x_i - e_i)^\top\E{(x_j - e_j)\mid\xi_{j-1}, \cdots, \xi_{1}}} \overset{\text{\ding{192}}}{=} 0
    \end{align}
    where \ding{192} follows from the conditional zero-mean property of $x_j - e_j$. Therefore, the cross terms vanish, which completes the proof.
\end{proof}

\begin{fact} \label{fact:2_flow}
    For any vectors $a, b \in \mathbb{R}^d$ and scalar $x > 0$, the following inequality holds:
    \begin{equation}
        \inner{a}{b} \leq \frac{x}{2}\normsq{a} + \frac{1}{2x}\normsq{b}
    \end{equation}
\end{fact}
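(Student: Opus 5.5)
The plan is to derive Fact~\ref{fact:2_flow} directly from the nonnegativity of a squared norm, after rescaling $a$ and $b$ by complementary powers of $x$. Since $x>0$, the quantities $\sqrt{x}$ and $1/\sqrt{x}$ are well defined and positive. This lets us consider the vector $\sqrt{x}\,a - \frac{1}{\sqrt{x}}\,b \in \mathbb{R}^d$.

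First, we will expand
\begin{equation*}
0 \leq \normsq{\sqrt{x}\,a - \frac{1}{\sqrt{x}}\,b}
\end{equation*}
using bilinearity and symmetry of the Euclidean inner product. This gives
\begin{equation*}
0 \leq x\normsq{a} - 2\inner{a}{b} + \frac{1}{x}\normsq{b}.
\end{equation*}
The cross term carries coefficient exactly $2\sqrt{x}\cdot\frac{1}{\sqrt{x}} = 2$, which is why this choice of scaling matches the target.

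Second, we will move $2\inner{a}{b}$ to the left-hand side and divide by $2$, which yields $\inner{a}{b} \leq \frac{x}{2}\normsq{a} + \frac{1}{2x}\normsq{b}$, as claimed.

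An equivalent route is the Cauchy--Schwarz inequality $\inner{a}{b}\le\|a\|\|b\|$ followed by the scalar AM--GM inequality $\|a\|\|b\| = (\sqrt{x}\|a\|)(\|b\|/\sqrt{x}) \le \frac{1}{2}\big(x\normsq{a} + \frac{1}{x}\normsq{b}\big)$. The squared-norm expansion is more self-contained, so we will use it.

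There is no substantive obstacle here. The only point requiring care is the hypothesis $x>0$, which is needed both to take square roots and to divide by $x$ without reversing the inequality. We will also note that equality holds exactly when $x a = b$.
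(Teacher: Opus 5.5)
Your argument is correct, but your main route differs from the paper's. The paper first applies Cauchy--Schwarz, $\langle a,b\rangle \le \|a\|\|b\|$. It then applies the scalar Young inequality $uv \le \frac{u^2}{2}+\frac{v^2}{2}$ with $u=\sqrt{x}\|a\|$ and $v=\|b\|/\sqrt{x}$. This is exactly the ``equivalent route'' you mention and set aside.

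Your completing-the-square argument, expanding $0 \le \|\sqrt{x}\,a - b/\sqrt{x}\|^2$, is more self-contained. It needs only bilinearity and symmetry of the inner product, with no appeal to Cauchy--Schwarz (which is itself usually proved by a nonnegativity argument of this kind). It is a single inequality rather than a chain of two, and it gives the equality case $b = xa$ directly. In the paper's chain, recovering that condition requires checking tightness of both steps: $a$ and $b$ positively parallel, and $x\|a\| = \|b\|$.

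The paper's version gains modularity by citing two standard named inequalities, at the cost of that extra bookkeeping. For the convergence analysis, only the inequality itself is used (with $x=\tfrac12$ in the proof of Lemma~\ref{lem:1_flow}), so either proof suffices.
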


\begin{proof}
    Cauchy-Schwarz inequality gives that $\inner{a}{b} \leq \|a\|\|b\|$, and Young's inequality gives that $uv \leq \frac{u^p}{p} + \frac{v^q}{q}$ for any $u,v \geq 0$ and $p,q > 1$ such that $\frac{1}{p} + \frac{1}{q} = 1$. Taking $p = q = 2$, $u = \sqrt{x}\|a\|$, and $v = \frac{\|b\|}{\sqrt{x}}$ yields $\|a\|\|b\| \leq \frac{x}{2}\normsq{a} + \frac{1}{2x}\normsq{b}$, which completes the proof.
\end{proof}

\noindent
\textbf{Proof of Lemma~\ref{lem:1_flow}.}
\begin{proof}
    According to Fact~\ref{fact:2_flow} and assume $x = \frac{1}{2}$, we have $\inner{a}{b} \leq \frac{1}{4}\normsq{a} + \normsq{b}$. With $a = \nabla F$ and $b = \nabla F_{\mathcal{C}^t} - \tilde{g}_{n,j}^t$, we obtain
    \begin{equation} \label{eq:l1pf1_flow}
        \begin{split}
            &\E{\inner{\nabla F}{\nabla F_{\mathcal{C}^t} - \tilde{g}_{n,j}^t}} \\
            &\leq \frac{1}{4}\E{\normsq{\nabla F}} + \E{\normsq{\nabla F_{\mathcal{C}^t} - \tilde{g}_{n,j}^t}}
        \end{split}
    \end{equation}
    For the second term, we have
    \begin{align}
        &\E{\normsq{\nabla F_{\mathcal{C}^t} - \tilde{g}_{n,j}^t}} \nonumber \\
        &\quad = \normsq{\frac{1}{|\mathcal{C}^t|}\sum_{n \in \mathcal{C}^t} \left(\nabla F_n(W_{glob}^t) - \mathbb{E}_\xi[\tilde{g}_{n,j}^t]\right)} \\
        &\quad \overset{\text{\ding{192}}}{\leq} \frac{L^2}{|\mathcal{C}^t|}\sum_{n \in \mathcal{C}^t} \normsq{W_{glob}^t - W_{n,j}^t} \label{eq:l1pf2_flow}
    \end{align}
    where \ding{192} follows from the identity $\mathbb{E}_\xi[\tilde{g}_{n,j}^t] = \nabla F_n(W_{n,j}^t)$, Jensen's inequality, and Assumption~\ref{ass:lsmooth_flow}(i). Substitute Eq.~\eqref{eq:l1pf2_flow} into Eq.~\eqref{eq:l1pf1_flow} to complete the proof.
\end{proof}

\noindent
\textbf{Proof of Lemma~\ref{lem:2_flow}.}
\begin{proof}
    According to the fact that $2\inner{a}{b} = \normsq{a} + \normsq{b} - \normsq{a-b}$ for any vectors $a$ and $b$. With $a = \nabla F$ and $b = \nabla F - \nabla F_{\mathcal{S}^t}$, we obtain
    \begin{align}
        \mathbb{E}&\Big[\inner{\nabla F}{\nabla F - \nabla F_{\mathcal{S}^t}}\Big] \nonumber \\
        &= \frac{1}{2}\E{\normsq{\nabla F}} - \frac{1}{2}\E{\normsq{\nabla F_{\mathcal{S}^t}}} + \frac{1}{2}\E{\normsq{\nabla F - \nabla F_{\mathcal{S}^t}}} \nonumber \\
        &\overset{\text{\ding{192}}}{\leq} \frac{1}{2}\E{\normsq{\nabla F}} - \frac{1}{2}\E{\normsq{\nabla F_{\mathcal{S}^t}}} \nonumber \\
        & \quad + \normsq{\nabla F - \nabla F_{\mathcal{C}^t}} + \normsq{\nabla F_{\mathcal{C}^t} - \nabla F_{\mathcal{S}^t}} \nonumber \\
        &\overset{\text{\ding{193}}}{\leq} \frac{1}{2}\E{\normsq{\nabla F}} - \frac{1}{2}\E{\normsq{\nabla F_{\mathcal{S}^t}}} \nonumber \\
        & \quad + d_{\mathcal{C}^t}^2 + \normsq{\nabla F_{\mathcal{C}^t} - \nabla F_{\mathcal{S}^t}}
    \end{align}
    where \ding{192} follows from the triangle inequality $\normsq{\nabla F - \nabla F_{\mathcal{S}^t}} \leq 2\normsq{\nabla F - \nabla F_{\mathcal{C}^t}} + 2\normsq{\nabla F_{\mathcal{C}^t} - \nabla F_{\mathcal{S}^t}}$, and \ding{193} follows from Assumption~\ref{ass:niid_flow}.
\end{proof}

\noindent
\textbf{Proof of Lemma~\ref{lem:3_flow}.}
\begin{proof}
    From the local update rule, $W_{glob}^t - W_{n,j}^t = \eta\sum_{i=0}^{j-1}\tilde{g}_{n,i}^t$. Therefore,
    \begin{align}
        & \frac{1}{|\mathcal{C}^t|}\sum_{n \in \mathcal{C}^t}\normsq{W_{glob}^t - W_{n,j}^t} = \eta^2\E{\normsq{\sum_{i=0}^{j-1}\tilde{g}_{n,i}^t}} \\
        &\quad = j^2\eta^2\E{\normsq{\frac{1}{j}\sum_{i=0}^{j-1}\tilde{g}_{n,i}^t}} \\
        &\quad \overset{\text{\ding{192}}}{\leq} j\eta^2\sum_{i=0}^{j-1}\E{\normsq{\tilde{g}_{n,i}^t}} \overset{\text{\ding{193}}}{\leq} j^2\eta^2G^2
    \end{align}
    where \ding{192} follows from Jensen's inequality, and \ding{193} follows from Assumption~\ref{ass:gradient_flow}.
\end{proof}

\noindent
\textbf{Proof of Lemma~\ref{lem:4_flow}.}
\begin{proof}
    Define the gradient deviation $h_n \triangleq \nabla F_n(W_{glob}^t) - \nabla F_{\mathcal{C}^t}(W_{glob}^t)$, noting that $\frac{1}{|\mathcal{C}^t|}\sum_{n \in \mathcal{C}^t} h_n = 0$. Substituting $\nabla F_n = \nabla F_{\mathcal{C}^t} + h_n$, the selection-induced gradient deviation is
    \begin{equation}
        \nabla F_{\mathcal{C}^t} - \nabla F_{\mathcal{S}^t} = -\frac{1}{|\mathcal{S}^t|}\sum_{n \in \mathcal{S}^t} h_n
    \end{equation}
    By Cauchy-Schwarz inequality,
    \begin{equation}
        \normsq{\nabla F_{\mathcal{C}^t} - \nabla F_{\mathcal{S}^t}} = \left\|\frac{1}{c|\mathcal{C}^t|}\sum_{n \in \mathcal{S}^t} h_n\right\|^2 \leq \frac{1}{c|\mathcal{C}^t|}\sum_{n \in \mathcal{S}^t} \normsq{h_n}
        \label{eq:eps_cs}
    \end{equation}

    By the triangle inequality we have $\normsq{h_n} \leq 2\normsq{\nabla F_n - v^*} + 2\normsq{v^* - \nabla F_{\mathcal{C}^t}}$. By Assumption~\ref{ass:qstrong} with $x = \nabla F_n$:
    \begin{equation}
        \normsq{\nabla F_n - v^*} \leq \frac{2}{\mu}(Q(v^*) - Q(\nabla F_n))
    \end{equation}
    Similarly with $x = \nabla F_{\mathcal{C}^t}$:
    \begin{equation}
        \normsq{v^* - \nabla F_{\mathcal{C}^t}} \leq \frac{2}{\mu}(Q(v^*) - Q(\nabla F_{\mathcal{C}^t})) = \frac{2\Delta_Q^t}{\mu}
    \end{equation}
    Combining both bounds yields $\normsq{h_n} \leq \frac{4}{\mu}(Q(v^*) - Q(\nabla F_n)) + \frac{4\Delta_Q^t}{\mu}$. Substituting into Eq.~\eqref{eq:eps_cs}:
    \begin{equation}
        \normsq{\nabla F_{\mathcal{C}^t} - \nabla F_{\mathcal{S}^t}} \leq \frac{4}{\mu}(Q(v^*) - \bar{Q}_{\mathcal{S}}) + \frac{4}{\mu}\Delta_Q^t
        \label{eq:eps_qterms}
    \end{equation}
    where $\bar{Q}_{\mathcal{S}} = \frac{1}{|\mathcal{S}^t|}\sum_{n \in \mathcal{S}^t} Q(\nabla F_n)$.

    By the top-$c$ selection rule, $\bar{Q}_{\mathcal{S}} \geq \bar{Q}$ where $\bar{Q} = \frac{1}{|\mathcal{C}^t|}\sum_n Q(\nabla F_n)$. From $\bar{Q} = c\bar{Q}_{\mathcal{S}} + (1-c)\bar{Q}_{\bar{\mathcal{S}}}$ and $\bar{Q}_{\bar{\mathcal{S}}} \leq Q(v^*)$:
    \begin{equation}
        Q(v^*) - \bar{Q}_{\mathcal{S}} \leq \frac{Q(v^*) - \bar{Q}}{c}
        \label{eq:eps_rate}
    \end{equation}

    By Assumption~\ref{ass:lsmooth_flow}(ii) at $x = v^*$ ($\nabla Q(v^*) = 0$), averaging over all $n$:
    \begin{equation}
        \bar{Q} \geq Q(v^*) - \frac{L_Q}{2}\cdot\frac{1}{|\mathcal{C}^t|}\sum_n\normsq{\nabla F_n - v^*}
    \end{equation}
    By the bias-variance decomposition $\frac{1}{|\mathcal{C}^t|}\sum_n\normsq{\nabla F_n - v^*} = \Gamma_{\mathcal{C}^t}^2 + \normsq{v^* - \nabla F_{\mathcal{C}^t}} \leq \Gamma_{\mathcal{C}^t}^2 + \frac{2\Delta_Q^t}{\mu}$:
    \begin{equation}
        Q(v^*) - \bar{Q} \leq \frac{L_Q}{2}\Gamma_{\mathcal{C}^t}^2 + \frac{L_Q}{\mu}\Delta_Q^t
        \label{eq:eps_jensen}
    \end{equation}

    Substituting Eq.~\eqref{eq:eps_rate} and Eq.~\eqref{eq:eps_jensen} into Eq.~\eqref{eq:eps_qterms}:
    \begin{equation}
        \begin{split}
            \normsq{\nabla F_{\mathcal{C}^t} - \nabla F_{\mathcal{S}^t}} &\leq \frac{4}{\mu c}\left(\frac{L_Q}{2}\Gamma_{\mathcal{C}^t}^2 + \frac{L_Q}{\mu}\Delta_Q^t\right) + \frac{4}{\mu}\Delta_Q^t \\
            &= \frac{2L_Q}{\mu c}\Gamma_{\mathcal{C}^t}^2 + \frac{4(L_Q + \mu c)}{\mu^2 c}\Delta_Q^t
        \end{split}
    \end{equation}
    which completes the proof.
\end{proof}

\end{document}